\definecolor{citecol}{HTML}{6F130C}
\definecolor{tableofcontent}{HTML}{1F4A83}
\definecolor{urlcol}{HTML}{2470D8}
\title{
Lower and Upper Bounds for Numbers of Linear Regions of Graph Convolutional Networks
}
\author{%
  Hao Chen \\
  Wuhan University\\
  Shanghai Jiao Tong University\\
  \texttt{ch2111009@gmail.com} 
   \And
   Yu Guang Wang \\
   Shanghai Jiao Tong University\\
   University of New South Wales\\
   \texttt{yuguang.wang@sjtu.edu.cn} 
  \And
  Huan Xiong\\
  Harbin Institute of Technology\\
  Mohamed bin Zayed University of AI\\
  \texttt{huan.xiong.math@gmail.com}
}
\begin{document}

\maketitle

\begin{abstract}
The research for characterizing GNN expressiveness attracts much attention as graph neural networks achieve a champion in the last five years. The number of linear regions has been considered a good measure for the expressivity of neural networks with piecewise linear activation. In this paper, we present some estimates for the number of linear regions of the classic graph convolutional networks (GCNs) with one layer and multiple-layer scenarios. In particular, we obtain an optimal upper bound for the maximum number of linear regions for one-layer GCNs, and the upper and lower bounds for multi-layer GCNs. The simulated estimate shows that the true maximum number of linear regions is possibly closer to our estimated lower bound. These results imply that the number of linear regions of multi-layer GCNs is exponentially greater than one-layer GCNs per parameter in general. This suggests that deeper GCNs have more expressivity than shallow GCNs.
\end{abstract}

\section{Introduction}\label{sec:intro}
Graph Neural Networks (GNNs) \cite{scarselli2008graph} have gained a champion for various structured prediction tasks. GNNs have the similar network architecture as the traditional CNNs \cite{goodfellow2016deep} but taking the graph structured data as input. Compared with CNNs, GNNs take account of the feature on the node, like a pixel feature, and also geometric property of the data in the network propagation. In particular, the feature of each node is extracted from layer to layer by aggregating its neighbor information \cite{gilmer2017neural}. While a good many GNN models have been proposed, the expressiveness of GNNs is yet to be characterized in theory. The number of linear regions has been a useful tool for capturing how complex a neural network is, and shown quantitatively why deep neural networks surpasses shallow networks \cite{Mont2014On,Xiong2020OnTN}. The linear regions can also reveal the expressiveness of GNNs, especially when the GNNs take the form of graph convolutional networks (GCNs) \cite{kipf2016semi}. 
In this paper, we prove the lower and upper bound of the number of linear regions of GCNs and show how the depth links to the linear regions. 

Different kinds of deep neural networks (DNNs) have shown outstanding performance in some circumstances such as sound recognition, computer vision, and recommendation system \cite{hinton2012deep,goodfellow2013maxout,abdel2014convolutional,silver2016mastering,vaswani2017attention,he2020lightgcn,zheng2021disentangling}.
People have developed theories for the powerful expressivity of DNNs to explain the outstanding performance of different neural networks. 
For example, a one-layer fully connected neural network is proved to have the power to approximate any continuous function with enough width \cite{Hornik1991ApproximationCO,barron1994approximation}. Any given Lebesgue-integrable function can be approximated by a neural network with enough layers \cite{hanin_boris,lu2017expressive}. Yarosky has provided the approximation error of deep neural networks \cite{yarotsky2017error}.
Other works quantify the impact of the network structure on expressivity for deep neural networks. 
Telgarsky \cite{telgarsky2015representation} has proved that the depth helps reduce the number of parameters: a shallow network needs much more parameters than a deep neural network to represent a same function.

\begin{figure}[t]
\centering   
 \begin{minipage}{\textwidth}
 \centering
\begin{minipage}{0.25\textwidth}
	\centering         
	\includegraphics[width=\textwidth]{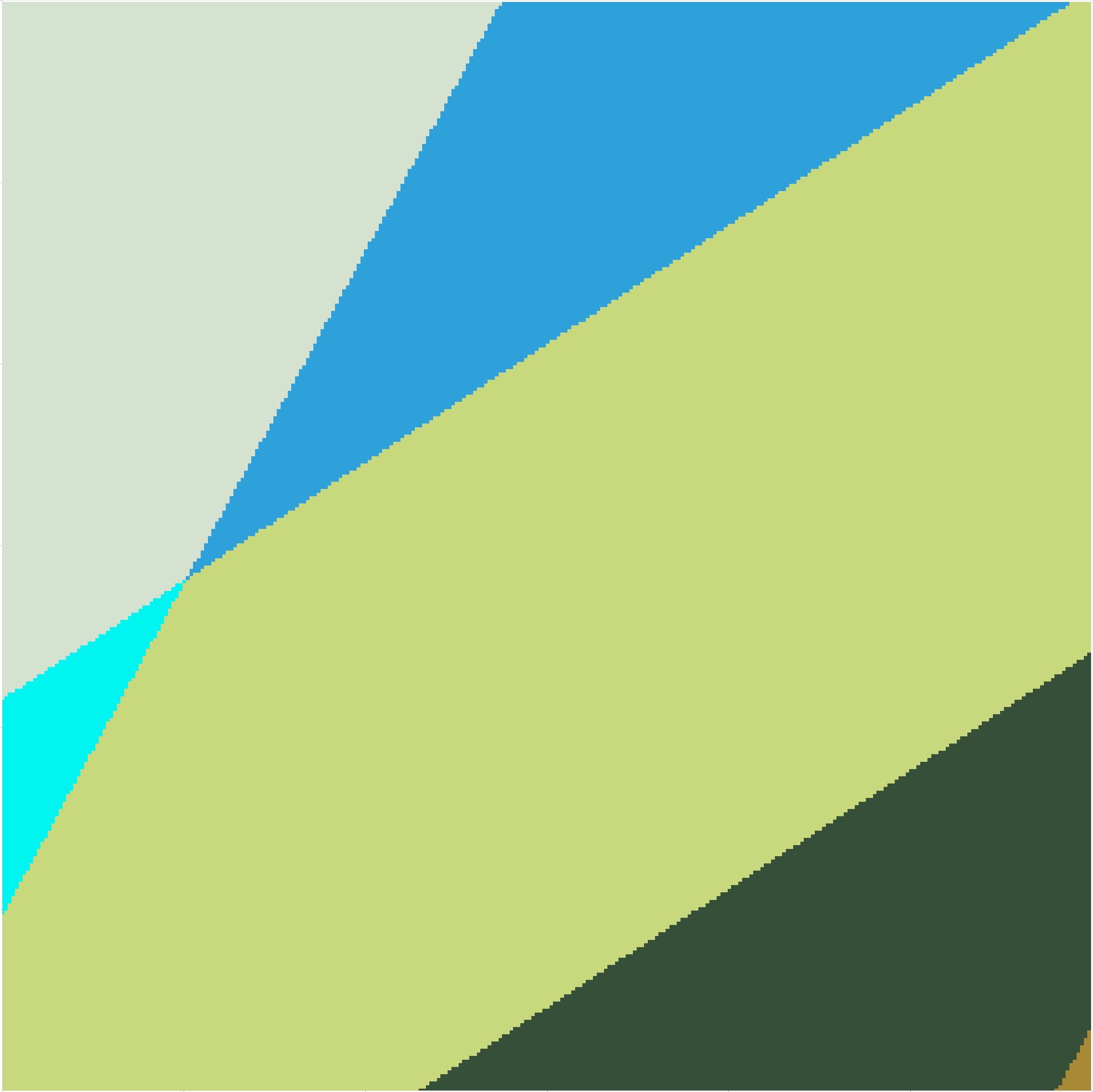}   
	\end{minipage}
	\hspace{3mm}
\begin{minipage}{0.25\textwidth}
	\centering         
	\includegraphics[width=\textwidth]{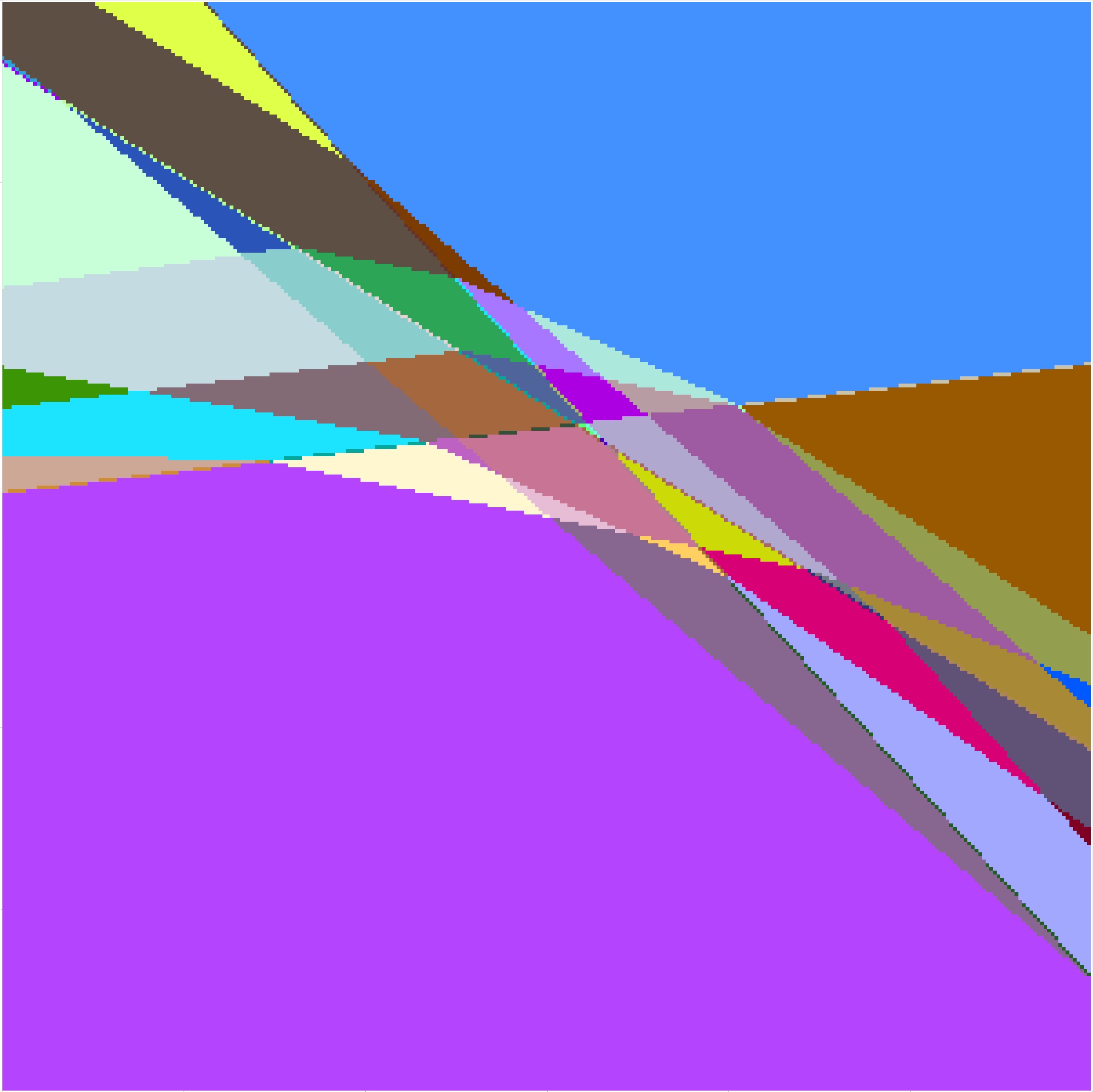}   
	\end{minipage}
	\hspace{3mm}
\begin{minipage}{0.25\textwidth}
	\centering     
	\includegraphics[width=\textwidth]{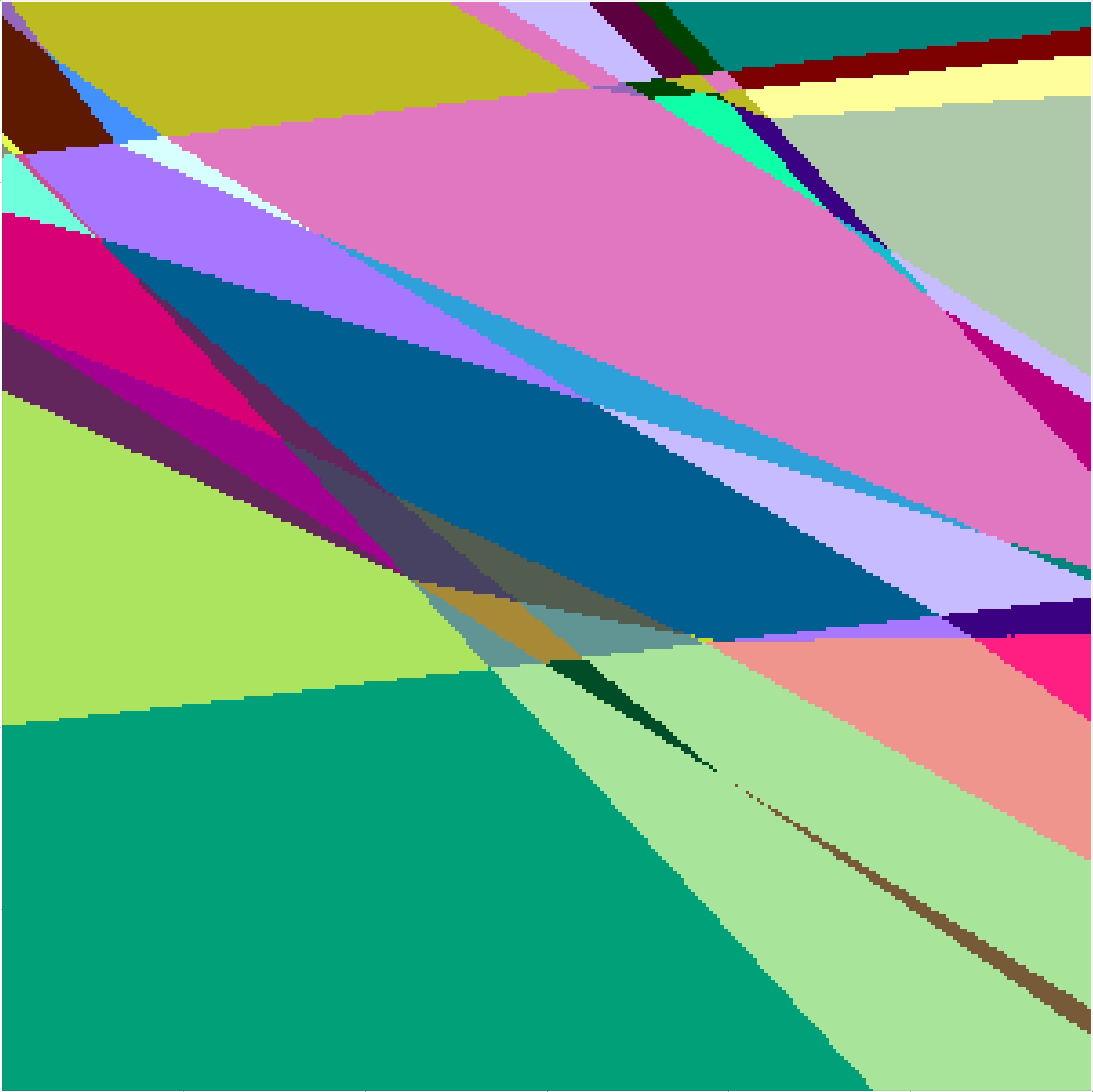}
	\end{minipage}
\end{minipage}
\caption{2D slices of linear regions for the input space of 3 GCNs with 1 layer (left), 2 layers (middle) and 3 layers (right), on an input graph with 3 nodes and 2 edges. Each GCN has 1 input feature for the first layer and 3 output features for all layers.}
\label{fig:linear-regions}
\end{figure}

A variety of methods have been proposed to characterize the ability of a network to approximate any given function, such as number of linear regions \cite{pascanu2013number,Mont2014On,chen2021neural}, length distortion \cite{price2021trajectory,hanin2021deep}, VC-dimension \cite{vapnik2015uniform}, and other topological and geometric tools \cite{h.2018on,liang2019fisher}. In this paper, we focus on the number of linear regions for GCNs. To count linear regions of neural networks, researchers mainly studied the NNs with Rectifier Linear Unit (ReLU) activation, a piecewise linear function introduced by \cite{hahnloser2000digital,hahnloser2000permitted}, which has been widely used in various kinds of neural networks \cite{glorot2011deep}. The feedforward propagation of a neural network activated with ReLU from one hidden layer to the next can be regarded as a piecewise linear function. Then, the whole neural network can be viewed as a composition of several piecewise linear functions, which is also piecewise. Neural networks with more linear pieces can estimate more complex functions. GCNs that are neural networks for structured data input can be also viewed as piecewise functions, and the number of the linear regions shows the expressive power of GCNs.

\paragraph{Activation Patterns of Neural Networks} Activation pattern of a neural network with piecewise linear activation can be characterized by the number of activation regions. It is important to measure the network complexity, approximation error, convergence rate of networks, implicit bias in parameter optimization and Lipschitz constants. Nguyen et al. (2020) \cite{DBLP:journals/corr/abs-2012-11654} showed that linear regions can be applied in the research of the eigenvalues of the neural tangent kernel and Lipschitz constant. Linear regions have been a useful tool for capturing how complex a neural network is, and shown quantitatively why deep neural networks surpass shallow networks \cite{pascanu2013number,Mont2014On,Xiong2020OnTN}. Steinwart et al. (2019) \cite{DBLP:journals/corr/abs-1903-11482} reported that the convergence of gradient descent is influenced by activation patterns. Phuong et al. (2020) \cite{phuong2020functional} linked linear regions with function-preserving transformations of network parameters.

\paragraph{Maximum and Expected Numbers of Linear Regions} The maximum number of linear regions is one of the most useful measures for activation patterns. Mont\'{u}far et al. (2014) \cite{Mont2014On} reported the upper and lower bounds for the maximum number of linear regions of NNs. The upper bound was later improved by Serra et al. (2018) \cite{serra2018bounding}. Based on \cite{hinz2019framework}, a tighter upper bound was proved by Hinz et al. (2021) \cite{Hinz2021UsingAH}.
As the activate pattern is largely affected by the network architecture and number of parameters, estimating the expected number of linear regions with respect to the choice of parameters is significant for comparing the network complexity among different architectures. The expected number of linear regions for ReLU networks was first reported in \cite{hanin2019complexity,hanin2019deep}. Tseran et al. (2021) showed the expected number of linear regions for maxout networks \cite{tseran2021expected}.

\paragraph{Related Notion} The activation regions can be measured from different perspectives, such as tropical geometry, max-affine splines, and trajectory length. Zhang et al. \cite{zhang2018tropical} showed that the family of ReLU neural networks is equivalent to the family of tropical rational maps. The trajectory length was first proposed in \cite{raghu2017expressive} to measure how the network output changes as the input sweeps along a one-dimensional path. To study the piecewise affine convex nonlinearities, the new operator was proposed to concatenate the max-affine spline formed from affine mappings \cite{balestriero2019geometry}. 

\paragraph{Contributions} Using specific topological and geometrical methods, this work obtains the lower and upper bounds for the number of one-layer and multi-layer GCNs. Below are our main contributions.
\begin{itemize}
    \item We obtain an optimal upper bound for the number of linear regions for one-layer GCNs when the network parameters are in a complement of a measure zero set with respect to space $\mathbb{R}^{\#{\rm weights}+\#{\rm biases}}$.
    \item We obtain the upper and lower bounds for the maximal number of linear regions of multi-layer GCNs. In our theory, we translate the problem of counting areas in the input space into a problem of counting hyper-cubes in the new space. 
    The lower bound is proved by constructing a specific GCN. 
    \item We show that the number of linear regions of multi-layer GCNs is exponentially greater than one-layer GCNs, which suggests that deeper GCNs have more expressivity than their shallow counterparts. 
    \item We also compare the number of linear regions for GCNs, NNs, and CNNs with the same input dimension and similar network architectures. It shows that the multi-layer ReLU (activated) GCNs have more linear regions than multi-layer ReLU CNNs and multi-layer ReLU NNs. 
\end{itemize}


\section{Preliminary}\label{sec:pre}
In this section, we give some basic notations and definitions. Let $\mathbb{N}$, $\mathbb{N}^+$ and $\mathbb{R}$ be the set of nonnegative integers, positive integers and real numbers respectively. Let $\#S$ be the number of elements in a set $S$. In this paper, we focus on GCNs with $L$-layer convolutions and exclude pooling layers and fully connected layers for simplicity. Let $G=(V,E)$ denote the graph of the GCNs where $V$ and $E$ are the set of vertices and edges of the graph. Let $D:=\#V$ be the number of nodes, and $N_0$ be the number of input features (i.e. feature dimension). Therefore, the input dimension of the first layer is $D\times N_0$. Suppose that there are $N_l$ output features in the $l$-th layer. Let $A$ be the adjacency matrix of graph $G$. We define $\hat{A}=M^{-1/2} (I+A)M^{-1/2}$ with $M$ being the degree matrix of $I+A$. Let $\tilde{A}$ be the $\hat{A}$ with repeated rows deleted, and $D^*=\rank(\hat{A})$ which is also the number of rows of $\tilde{A}$.

The Rectifier Linear Unit (ReLU) is adopted as the activation function for each neuron in every hidden layer. Let $X^{(0)}\in \mathbb{R}^{D\times N_0}$ be the input of the first layer and $X^{(l)}\in \mathbb{R}^{D\times N_l}$ be the output of the $l$-th layer. Let $W_l\in \mathbb{R}^{N_{l-1}\times N_l}$ and $b_l\in \mathbb{R}^N_l$ be the weight and bias of $l$-th layer. Therefore, we denote the weights and biases for the GCNs as $W=(W_1,W_2,\dots,W_L)$ and $B=(b_1,b_2,\dots,b_L)$. Denote by $\theta=(W,B)$ trainable network parameters including weights and biases. 

As we can represent a one-layer GCN with given weights and biases by a composition of an affine linear function and a piece-wise linear function, we can see the GCN as a piece-wise linear function. Thus, for any given $W$ and $B$, the $L$-layer GCN can be seen as a composition of $L$ piece-wise linear functions, which we then write as a piece-wise linear function $\mathcal{F}_{\mathcal{N},W,B}(X^{(0)})$.

\begin{definition}[Activation Patterns and Linear Regions] \label{def:ac-pa}
 Let $\mathcal{N}$ be an $L$-layer GCN with $n$ neurons in total. An activation pattern is a function of all inputs (denoted as $z_i,\ i=1,2,\dots,n$) of an activation function of $n$ neurons in $\mathcal{N}$. The value of an activation pattern is in $\{-1,1\}^n$. Suppose that $P_i$ is the $i$-th component of $P(z_1,z_2,\dots,z_n)$. A region corresponding to $P$ and parameters $\theta$ reads 
$$
\mathcal{R}(P,\theta)=\{X^{(0)}:\forall i\ \in\ [1,n],\ P_i\cdot z_i>  0\}.
$$
A non-empty $\mathcal{R}(P,\theta)$ is a linear region for $\mathcal{N}$ when the parameter is $\theta$. Therefore, the number of linear regions of $\mathcal{N}$ with parameter $\theta$ is $R_{\mathcal{N},\theta}=\#\{\mathcal{R}(P,\theta):\exists P\ such\ that\ \mathcal{R}(P,\theta)\neq \varnothing\}$. Furthermore, denote the maximal number of linear regions by $R_{\mathcal{N}}$.
\end{definition}

\begin{remark}
From Definition \ref{def:ac-pa}, $\mathcal{F}_{\mathcal{N},W,B}$ is a linear affine function when restricted in one linear region. Then, the $L$-layer GCN can represent a piece-wise linear function with $R_{\mathcal{N},\theta}$ pieces. Therefore, we can count the number of linear regions to measure the expressive power of a GCN. A formula or bounds of the number of linear regions will be shown for single- and multi-layer GCNs in our work.
\end{remark}

The definition of activation pattern implies that the maximal number of activation patterns for $\mathcal{N}$ is $2^n$,
as stated in the next lemma.
\begin{lemma}\label{lemma:trivial-bound}
A trivial upper bound of the number of linear regions is 
\begin{equation}
    R_{\mathcal{N},\theta}\leq 2^n.
\end{equation}
\end{lemma}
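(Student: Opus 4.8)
The plan is to read the bound straight off Definition \ref{def:ac-pa}, since the statement is essentially a restatement of the fact that an activation pattern is a map into a set of size $2^n$. By construction, every linear region of $\mathcal{N}$ at parameter $\theta$ has the form $\mathcal{R}(P,\theta)$ for some activation pattern $P$, and $R_{\mathcal{N},\theta}$ is defined to be the cardinality of the set $\{\mathcal{R}(P,\theta) : \exists P \text{ such that } \mathcal{R}(P,\theta)\neq\varnothing\}$. Hence $R_{\mathcal{N},\theta}$ is at most the number of activation patterns $P$ that produce a non-empty region, which is at most the total number of activation patterns.

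First I would observe that an activation pattern takes values in $\{-1,1\}^n$, so there are at most $\#\{-1,1\}^n = 2^n$ distinct patterns. The assignment $P \mapsto \mathcal{R}(P,\theta)$, restricted to those $P$ with $\mathcal{R}(P,\theta)\neq\varnothing$, surjects onto the set of linear regions, so the number of linear regions is bounded by the size of its index set, giving $R_{\mathcal{N},\theta}\le 2^n$. One may additionally note that two distinct patterns in fact yield disjoint regions, since if $P$ and $P'$ differ in coordinate $i$ then the constraints $P_i\cdot z_i>0$ and $P'_i\cdot z_i>0$ are mutually exclusive; but this refinement is not required for the inequality.

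There is no genuine obstacle here: the content is purely the counting fact that the index set of activation patterns has cardinality $2^n$. The only point to keep in mind is that a region could a priori be named by more than one pattern — which is harmless, since we are bounding the cardinality of an image set — and, as just noted, this degeneracy does not actually occur for non-empty regions.
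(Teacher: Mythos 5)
Your argument is correct and is exactly the reasoning the paper relies on: the lemma is presented as an immediate consequence of Definition~\ref{def:ac-pa}, since each non-empty linear region is indexed by an activation pattern in $\{-1,1\}^n$ and there are at most $2^n$ such patterns. Your additional observation that distinct patterns give disjoint regions is a harmless (and correct) refinement beyond what is needed.
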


To state the main results, we introduce the asymptotic notation: For two functions $f(x)$ and $g(x)$, we write $f(x)=\mathcal{O}(g(x))$ if $\exists\  c>0$ s.t. $f(x)\leq cg(x)$ for all $x$ greater than some constant; $f(x)=\Omega(g(x))$ if $\exists\  c> 0$ s.t. $f(x)\geq cg(x)$ for all $x$ greater than some constant; $f(x)=\Theta(g(x))$ if $\exists\  c_1>0$ and $c_2> 0$ s.t. $c_1g(x)\leq f(x)\leq c_2g(x)$ for all $x$ greater than some constant. We put all proofs of the results of the paper in the appendix.

\section{Number of Linear Regions for One-Layer GCNs}
In this section, we show the exact formula for the maximal number and average number of linear regions of GCNs. The estimate shows that if the parameters of GCNs are drawn from some distribution, the average number of linear regions is exactly the maximal number of linear regions. Therefore, it is almost sure that the number of linear regions of GCNs with arbitrary given parameters can reach the maximal number. 

\paragraph{Exact Formulas for One-Layer GCNs}
The next proposition about linear regions of fully connected NNs is needed to derive the exact formula for One-Layer GCNs.
\begin{proposition}\label{prop:nn-eq}
Let $\mathcal{N}$ be a one-layer fully connected neural network. Assume that the input dimension of $\mathcal{N}$ is $n_0$ and there are $n_1$ neurons. Then, the maximal number of linear regions of $\mathcal{N}$ is $\sum_{i=0}^{n_0}\tbinom{n_1}{i}$.
\end{proposition}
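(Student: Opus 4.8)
The plan is to reduce the count of linear regions to a classical fact about hyperplane arrangements. For a one-layer fully connected network with weights $w_1,\dots,w_{n_1}\in\mathbb{R}^{n_0}$ and biases $b_1,\dots,b_{n_1}\in\mathbb{R}$, the $j$-th neuron has pre-activation $z_j(x)=w_j^{\top}x+b_j$, so with $n=n_1$ Definition \ref{def:ac-pa} gives, for a pattern $P\in\{-1,1\}^{n_1}$, $\mathcal{R}(P,\theta)=\{x\in\mathbb{R}^{n_0}: P_j\,(w_j^{\top}x+b_j)>0\ \text{for all }j\}$, i.e. exactly an intersection of $n_1$ open half-spaces bounded by the hyperplanes $H_j=\{x:w_j^{\top}x+b_j=0\}$. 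Distinct patterns give disjoint regions, so $R_{\mathcal{N},\theta}$ is precisely the number of full-dimensional cells of the arrangement $\mathcal{A}=\{H_1,\dots,H_{n_1}\}$ in $\mathbb{R}^{n_0}$, and $R_{\mathcal{N}}$ is the maximum of this number over all choices of $(w_j,b_j)$.

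First I would prove the upper bound $r_d(m)\le\sum_{i=0}^{d}\binom{m}{i}$ for the number of cells of any arrangement of $m$ hyperplanes in $\mathbb{R}^{d}$, by induction on $m$ and $d$. Adding a hyperplane $H_m$ to an arrangement of $m-1$ hyperplanes, a cell $C$ of the old arrangement is split into two cells precisely when $H_m\cap C\ne\varnothing$, and the nonempty sets $H_m\cap C$ are exactly the cells of the induced arrangement of the traces $H_m\cap H_j$ ($j<m$) inside $H_m\cong\mathbb{R}^{d-1}$; there are at most $r_{d-1}(m-1)$ of these, so $r_d(m)\le r_d(m-1)+r_{d-1}(m-1)$. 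With the base cases $r_0(m)=1$ and $r_d(0)=1$ and Pascal's identity $\binom{m}{i}=\binom{m-1}{i}+\binom{m-1}{i-1}$, the closed form follows, hence $R_{\mathcal{N}}\le\sum_{i=0}^{n_0}\binom{n_1}{i}$.

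Next I would show the bound is attained by exhibiting a single good parameter choice: hyperplanes in \emph{general position}, meaning every $k\le n_0$ of them intersect in an affine flat of codimension $k$ and no $n_0+1$ of them share a point. For such an arrangement every inequality in the recursion becomes an equality, because the traces of a newly added hyperplane are again in general position inside it, so the induced $(d-1)$-dimensional arrangement attains $r_{d-1}(m-1)$ cells; unwinding the induction then yields exactly $\sum_{i=0}^{n_0}\binom{n_1}{i}$ cells. Finally, general position holds for all $(w_j,b_j)_{j=1}^{n_1}$ outside a finite union of algebraic hypersurfaces in $\mathbb{R}^{n_1(n_0+1)}$, hence for a nonempty (indeed full-measure) set of network parameters, which forces $R_{\mathcal{N}}=\sum_{i=0}^{n_0}\binom{n_1}{i}$.

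I expect the inductive step of the upper bound to be the main obstacle: one has to argue cleanly that $H_m$ adds exactly one new cell per cell into which its own trace arrangement subdivides it, which hinges on the dimension-drop correspondence $C\mapsto H_m\cap C$ between split cells of the $d$-dimensional arrangement and cells of a $(d-1)$-dimensional one, and then on checking that this correspondence is a bijection (not merely a surjection) under general position so that the recursion is tight. The reduction to arrangements and the almost-everywhere realizability of general position are comparatively routine — the latter being the same ``measure-zero exceptional set'' phenomenon used elsewhere in the paper; as a sanity check, for $n_1\le n_0$ the formula collapses to $2^{n_1}$, consistent with Lemma \ref{lemma:trivial-bound}.
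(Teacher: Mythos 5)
Your proof is correct and follows the same route the paper takes for Proposition~\ref{prop:nn-eq}: the paper gives no standalone proof, only the observation that the pre-activations are affine so the linear regions are exactly the chambers of the hyperplane arrangement $\{w_j^{\top}x+b_j=0\}$, maximized in general position, with the chamber count $\sum_{i=0}^{n_0}\binom{n_1}{i}$ cited to Pascanu et al.\ and Zaslavsky's theorem. The only difference is that you supply a self-contained deletion--restriction induction for that classical counting formula (including tightness under general position and its generic realizability), which is sound and simply fills in the step the paper delegates to a citation.
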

Since $\mathcal{N}$ is a fully connected neural network, the pre-activation of a hidden layer neuron is the affine function of the input, which then translates the problem of counting the number of linear regions to counting the number of regions of hyperplane arrangements in the general position, and this then implies the following formula for the number of linear regions of one-layer GCNs.
\begin{theorem}\label{thm:one-layer-ineq}
Assume that $\mathcal{N}$ is a one-layer GCN with the input $X^{(0)}\in \mathbb{R}^{D\times N}$ and the output $X^{(1)}\in \mathbb{R}^{D\times N^{\prime}}$. Let $\Tilde{A}$ denote normalized adjacency matrix $A$ with duplicated rows removed and $D^*=\text{rank}(A)$. Let $V_{1,1}, V_{1,2}, \dots, V_{D^*N^{\prime}}$ be the subspaces of $V=\mathbb{R}^{D^*\times N}$ and $V_{ij}=\alpha_i^{\top}w_j^{\top}$ where $\alpha_i$ is the $i$-th column of $\hat{A}$ and $w_j$ is the $j$-th row of $W$. For given network parameters $\theta$, let 
\begin{equation}\label{eq:K_N}
    K_{\mathcal{N},\theta}:=\left\{(k_{1,1},k_{1,2},...,k_{D^*N^{\prime}}):k_{ij}\in \mathbb{N},\sum_{i\in I,\: j\in J} k_{ij} \leq {\rm dim}\bigl(\sum_{k_{ij}=1}V_{ij}\bigr),\; \forall (i,j)\in [D^*]\times [N^{\prime}]\right\}.
\end{equation}
Then, we have the upper bound 
\begin{equation}\label{eq:rn-kn}
    R_{\mathcal{N},\theta}\leq \sum\limits_{(k_{1,1},...,k_{D^*N^{\prime}})\in K_{\mathcal{N},\theta}}\;\prod\limits_{(i,j)\in [D^*]\times [N^{\prime}]}^{D^*}\binom{1}{k_{ij}}=\#K_{\mathcal{N},\theta}.
\end{equation}
\end{theorem}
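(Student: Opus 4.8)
The plan is to turn a one-layer GCN into an affine hyperplane arrangement on a reduced input space and to bound its number of regions by a matroidal quantity that we identify with $\#K_{\mathcal N,\theta}$. A one-layer GCN applies $\sigma$ to the affine map that sends $X^{(0)}$ to $\hat A X^{(0)}W$ with the bias $b_1$ added to every row, so the pre-activation of the neuron attached to a row of $\hat A$ indexed by $i$ and to output feature $j$ is $X^{(0)}\mapsto \alpha_i^{\top}X^{(0)}w_j+(b_1)_j$, with $\alpha_i$ a row of $\hat A$ and $w_j$ a column of $W$. Two such pre-activations coincide when the corresponding rows of $\hat A$ are equal, so only the $D^{*}=\operatorname{rank}\hat A$ distinct rows (the rows of $\tilde A$) matter; moreover every pre-activation factors through the surjection $X^{(0)}\mapsto \tilde A X^{(0)}$ onto $V:=\mathbb R^{D^{*}\times N}$. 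Hence, up to the degenerate case of an identically-zero pre-activation, the linear regions of $\mathcal N$ are in bijection with the open regions of the affine arrangement $\mathcal H=\{H_{ij}:(i,j)\in[D^{*}]\times[N']\}$ in $V$, where $H_{ij}=\{Y\in V:\langle V_{ij},Y\rangle+(b_1)_j=0\}$ and $V_{ij}$ is the rank-one matrix built from row $i$ of $\tilde A$ and column $j$ of $W$; so $R_{\mathcal N,\theta}=r(\mathcal H)$, the number of regions of $\mathcal H$.

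\textbf{The combinatorial core.} The main step is the estimate: for any finite arrangement $\mathcal A$ of affine hyperplanes with nonzero normals $v_1,\dots,v_m$ in a vector space, $r(\mathcal A)\le\#\{S\subseteq[m]:\{v_k\}_{k\in S}\text{ linearly independent}\}$. I would prove this by induction on $\dim+m$ by deleting the last hyperplane: $r(\mathcal A)=r(\mathcal A\setminus H_m)+r\!\left((\mathcal A\setminus H_m)\big|_{H_m}\right)$, where the induced arrangement on $H_m\cong\mathbb R^{d-1}$ consists of the traces of exactly those $H_k$ with $v_k\notin\mathbb R v_m$, whose normals are the nonzero orthogonal projections $\bar v_k$ of $v_k$ onto $v_m^{\perp}$. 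Applying the hypothesis to both summands and using that $\{\bar v_k\}_{k\in S}$ is independent in $v_m^{\perp}$ iff $\{v_k\}_{k\in S}\cup\{v_m\}$ is independent in the ambient space, the second summand is bounded by the number of independent subsets of $[m]$ containing $m$, and adding the two bounds closes the induction. This refines Proposition~\ref{prop:nn-eq} beyond the general-position case, where the right-hand side collapses to $\sum_{i\le n_0}\binom{n_1}{i}$; alternatively, using that the distinct rows of $\hat A$ are linearly independent one may first split $\mathcal H$ as a direct product of $D^{*}$ arrangements of $N'$ hyperplanes in $\mathbb R^{N}$, reducing the estimate to a single factor.

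\textbf{Identification with $K_{\mathcal N,\theta}$.} It remains to unpack $K_{\mathcal N,\theta}$ from \eqref{eq:K_N}. Because of the factor $\binom{1}{k_{ij}}$ in \eqref{eq:rn-kn}, only $\{0,1\}$-valued tuples contribute to the sum, and for such a tuple the defining inequality (taken over the full index set) reads $\#\{(i,j):k_{ij}=1\}\le\dim\operatorname{span}\{V_{ij}:k_{ij}=1\}$, which forces equality, i.e. says exactly that $\{V_{ij}:k_{ij}=1\}$ is linearly independent. Hence $\sum_{k\in K_{\mathcal N,\theta}}\prod_{(i,j)}\binom{1}{k_{ij}}=\#K_{\mathcal N,\theta}$ equals the number of linearly independent subsets of $\{V_{ij}\}$, and combining this with the previous two paragraphs gives $R_{\mathcal N,\theta}=r(\mathcal H)\le\#K_{\mathcal N,\theta}$, which is \eqref{eq:rn-kn}.

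\textbf{Main obstacle.} The hard part will be the combinatorial lemma, especially the bookkeeping in the affine, non-central setting when restricting to $H_m$: one must check that precisely the $H_k$ with $v_k\notin\mathbb R v_m$ leave a genuine hyperplane trace, that the projected normals are nonzero, and that independence of $\{\bar v_k\}$ corresponds exactly to independence of $\{v_k\}\cup\{v_m\}$. A secondary point is making the first-paragraph reduction rigorous — handling the repeated rows of $\hat A$ and the rank drop of $X^{(0)}\mapsto\hat A X^{(0)}$, and invoking the stated fact that $\operatorname{rank}\hat A$ equals the number of distinct rows of $\hat A$ — so that the count is exactly $r(\mathcal H)$.
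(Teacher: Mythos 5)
Your reduction is the same as the paper's: both pass from the GCN to an affine hyperplane arrangement in the quotient space $\mathbb{R}^{D^*\times N}$ whose normals are the rank-one matrices $a_i^{\top}w_j^{\top}$ built from the distinct rows of $\hat A$ and the columns of $W$, and both then identify $\#K_{\mathcal N,\theta}$ with the number of linearly independent subsets of these normals (the $\binom{1}{k_{ij}}$ factors forcing $k_{ij}\in\{0,1\}$, exactly as you observe). Where you diverge is the combinatorial core: the paper simply invokes the generalized Zaslavsky bound of Xiong et al.\ (Lemma~\ref{lemma:xiong}) as a black box, whereas you propose to prove directly, by deletion--restriction induction, that $r(\mathcal A)\le\#\{S:\{v_k\}_{k\in S}\ \text{linearly independent}\}$. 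This is precisely the specialization of Lemma~\ref{lemma:xiong} to the case where every subspace $V_{ij}$ is one-dimensional and contains a single hyperplane, which is the only case the theorem needs, so your lemma suffices; your induction sketch is sound (the identity $r(\mathcal A)=r(\mathcal A\setminus H_m)+r((\mathcal A\setminus H_m)|_{H_m})$ is standard, the projected-normal independence criterion is correct, and hyperplanes that are parallel to or coincide on $H_m$ only shrink the restricted arrangement, which preserves the upper bound). What your route buys is a self-contained, elementary proof that avoids importing the full subspace-constrained machinery; what it costs is generality (the cited lemma also delivers the equality conditions that the paper reuses in Theorem~\ref{thm:one-layer-eq}) and the careful affine bookkeeping you already flag. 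One further point worth making explicit in a final write-up: the passage from regions in the input space $\mathbb{R}^{D\times N}$ to regions of $\mathcal H$ in $\mathbb{R}^{D^*\times N}$ uses surjectivity of $X\mapsto\tilde A X$, which follows from Lemma~\ref{lemma:adj-matrix-rank}.
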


To prove Theorem~\ref{thm:one-layer-ineq}, we creatively change the form of adjacency matrix $\hat{A}$ by deleting the repeated rows and constructing several arrangements not in general position with the row and column number. From the generalization form of Zaslavsky theorem in \cite{Xiong2020OnTN}, we derive \eqref{eq:rn-kn}.

From the above Theorem~\ref{thm:one-layer-ineq}, the rank of $A$ and $W$ is closely related to the number of elements in $K_{\mathcal{N},\theta}$ in \eqref{eq:K_N}. We have the next proposition which shows the upper bound for the maximal number of linear regions in two scenarios for whether $N>N^{\prime}$ or $N\leq N^{\prime}$.
\begin{proposition}\label{col:max-one-layer}
Let $\rank(\Tilde{A})=D^*$. The next two estimates for $K_{\mathcal{N},\theta}$ in \eqref{eq:K_N} hold.

(\romannumeral1) When $N >N^{\prime}$ and $\mathrm{rank}(W)=N^{\prime}$,
\begin{equation}\label{eq:kn1}
    \#K_{\mathcal{N},\theta}=2^{N^{\prime}D^*}.
\end{equation}

(\romannumeral2) When $N \leq N^{\prime}$ and any N column vectors in W are linearly independent,
\begin{equation}\label{eq:kn2}
    \#K_{\mathcal{N},\theta}=\left(\sum_{i=0}^{N}\tbinom{N^{\prime}}{i}\right)^{D^*}.
\end{equation}
\end{proposition}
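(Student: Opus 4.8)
The plan is to reduce the evaluation of $\#K_{\mathcal{N},\theta}$ to a product of $D^*$ identical one-dimensional counting problems, exploiting the rank-one form of the $V_{ij}$ (the outer product of the column $\alpha_i$ of $\hat A$ with the column $w_j$ of $W$). First I would note that the factor $\binom{1}{k_{ij}}$ in \eqref{eq:rn-kn} forces every coordinate $k_{ij}\in\{0,1\}$, so a tuple lies in $K_{\mathcal{N},\theta}$ exactly when the selected family $\{V_{ij}:k_{ij}=1\}$ is linearly independent inside $V=\mathbb{R}^{D^*\times N}$; hence $\#K_{\mathcal{N},\theta}$ equals the number of independent subsets $S\subseteq[D^*]\times[N']$ of the vector configuration $\{V_{ij}\}$, i.e.\ the number of independent sets of the linear matroid it defines.

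Next I would isolate the two ingredients that make this matroid split. Since $\rank(\tilde A)=D^*$, the $D^*$ vectors $\alpha_1,\dots,\alpha_{D^*}$ are linearly independent, so the subspaces $\alpha_i\,\mathbb{R}^{N}$ ($i\in[D^*]$) are independent (their sums are direct). Writing $S=\bigcup_i(\{i\}\times J_i)$ with $J_i=\{j:(i,j)\in S\}$, the family $\{V_{ij}\}_{(i,j)\in S}$ is then linearly independent if and only if each block $\{\alpha_i w_j^{\top}\}_{j\in J_i}$ is, and since $v\mapsto\alpha_i v^{\top}$ is an injective linear map, this happens iff $\{w_j\}_{j\in J_i}$ is linearly independent among the columns of $W$. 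Thus the matroid on $[D^*]\times[N']$ is the direct sum of $D^*$ copies of the column matroid $\mathcal{M}$ of $W$ on ground set $[N']$, and
\begin{equation*}
  \#K_{\mathcal{N},\theta}=\Bigl(\#\{J\subseteq[N']:\{w_j\}_{j\in J}\text{ linearly independent}\}\Bigr)^{D^*}.
\end{equation*}

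It then remains to count the independent sets of $\mathcal{M}$ in each regime. For (i), $N>N'$ with $\rank(W)=N'$ means the $N'$ columns of $W$ are themselves linearly independent in $\mathbb{R}^{N}$, so every $J\subseteq[N']$ is independent, there are $2^{N'}$ of them, and $\#K_{\mathcal{N},\theta}=2^{N'D^*}$. For (ii), $N\le N'$ together with ``any $N$ columns of $W$ are linearly independent'' gives that $\{w_j\}_{j\in J}$ is independent precisely when $|J|\le N$, so the number of admissible $J$ is $\sum_{i=0}^{N}\binom{N'}{i}$ and $\#K_{\mathcal{N},\theta}=\bigl(\sum_{i=0}^{N}\binom{N'}{i}\bigr)^{D^*}$. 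I expect the one genuinely load-bearing step to be the direct-sum decomposition in the middle paragraph — establishing that independence of the $\alpha_i$ really does prevent any cross-row cancellation among the $V_{ij}$, so that the global matroid factors — after which both cases are routine binomial bookkeeping; as a sanity check, at the boundary $N=N'$ both formulas collapse to $2^{N'D^*}$.
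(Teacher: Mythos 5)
Your proof is correct and takes essentially the same route as the paper's: both reduce $\#K_{\mathcal{N},\theta}$ to counting, independently for each of the $D^*$ rows of $\tilde{A}$, the subsets of columns of $W$ that are linearly independent, with the linear independence of the $\alpha_i$ used exactly as you do to rule out cross-row cancellation among the rank-one matrices $\alpha_i w_j^{\top}$. Your matroid direct-sum phrasing is a cleaner packaging of the paper's more informal ``if and only if'' argument, but it is the same underlying idea, and the two case counts are identical.
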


Theorem~\ref{thm:one-layer-ineq} and Proposition~\ref{col:max-one-layer} show that in general case $R_{\mathcal{N},\theta}$ is upper bounded by $\#K_{\mathcal{N},\theta}$. The next theorem shows that the probability of $R_{\mathcal{N},\theta}=\#K_{\mathcal{N},\theta}$ is $1$.
\begin{theorem}\label{thm:one-layer-eq}
Assume that the parameters $W$ and $B$ are drawn from some distribution $\mu$ which has densities with respect to Lebesgue measure in $\mathbb{R}^{\#weights+\#biases}$. Then, the number of linear regions of the GCN
$R_{\mathcal{N},\theta}=\left(\sum_{i=0}^{N}\tbinom{N^{\prime}}{i}\right)^{D^*}$ almost surely, that is, when network parameters are in the complement of a measure-zero set of $\mu$ on $\mathbb{R}^{\#{\rm weights}+\#{\rm biases}}$. 
Moreover, the expectation is
\begin{equation}\label{eq:expected R_N}
    E_{\theta\sim \mu}(R_{\mathcal{N},\theta})=\left(\sum_{i=0}^{N}\tbinom{N^{\prime}}{i}\right)^{D^*}.
\end{equation}
\end{theorem}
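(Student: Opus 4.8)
Put $c:=\bigl(\sum_{i=0}^{N}\tbinom{N^{\prime}}{i}\bigr)^{D^{*}}$, and note that $\sum_{i=0}^{N}\tbinom{N^{\prime}}{i}=\sum_{i=0}^{\min(N,N^{\prime})}\tbinom{N^{\prime}}{i}$, so $c$ is the common value appearing in both regimes of Proposition~\ref{col:max-one-layer}. The plan is to produce two $\mu$-full-measure events: on the first, $R_{\mathcal{N},\theta}\le c$; on the second, $R_{\mathcal{N},\theta}\ge c$. Intersecting them gives $R_{\mathcal{N},\theta}=c$ $\mu$-almost surely (which also certifies $R_{\mathcal{N}}=c$). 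Since $0\le R_{\mathcal{N},\theta}\le 2^{n}$ is bounded by Lemma~\ref{lemma:trivial-bound}, integrating the almost-surely constant $R_{\mathcal{N},\theta}$ against $\mu$ then returns $E_{\theta\sim\mu}(R_{\mathcal{N},\theta})=c$, which is \eqref{eq:expected R_N}.

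For the upper bound, observe that the genericity hypothesis of Proposition~\ref{col:max-one-layer} (namely $\rank(W)=N^{\prime}$ when $N>N^{\prime}$, or every $N$ columns of $W$ linearly independent when $N\le N^{\prime}$) can fail only on the common zero locus of finitely many nonzero minors of $W$, a proper algebraic subvariety of $\mathbb{R}^{N\times N^{\prime}}$ and hence Lebesgue-null; since $\mu$ has a density with respect to Lebesgue measure, this set is $\mu$-null. On its complement, Theorem~\ref{thm:one-layer-ineq} and Proposition~\ref{col:max-one-layer} give $R_{\mathcal{N},\theta}\le\#K_{\mathcal{N},\theta}=c$.

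For the lower bound I would first establish an \emph{exact} factorization of the region count. Because $\rank(\tilde A)=D^{*}$ equals the number of rows of $\tilde A$, those rows $\alpha_{1},\dots,\alpha_{D^{*}}$ are linearly independent, so the linear map $X^{(0)}\mapsto(\alpha_{v}^{\top}X^{(0)})_{v\in[D^{*}]}$ is a surjection $\mathbb{R}^{D\times N}\twoheadrightarrow V=\mathbb{R}^{D^{*}\times N}$ through which every pre-activation of $\mathcal{N}$ factors; hence $R_{\mathcal{N},\theta}$ equals the number of regions of the induced arrangement on $V$. Writing $Y=(Y_{1},\dots,Y_{D^{*}})\in V$ with $Y_{v}\in\mathbb{R}^{N}$, the pre-activation of neuron $(v,j)$ is $Y_{v}^{\top}w_{j}+b_{j}$ (with $w_{j}$ the $j$-th column of $W$), depending on the block $Y_{v}$ alone; therefore the arrangement is a direct product over the $D^{*}$ blocks $\alpha_{v}\otimes\mathbb{R}^{N}$, and all blocks carry the \emph{same} arrangement of $N^{\prime}$ affine hyperplanes in $\mathbb{R}^{N}$ (normals $w_{1},\dots,w_{N^{\prime}}$, offsets $b_{1},\dots,b_{N^{\prime}}$). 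Thus $R_{\mathcal{N},\theta}=r^{D^{*}}$ where $r$ is the region count of that single $\mathbb{R}^{N}$-arrangement. By Proposition~\ref{prop:nn-eq}, applied to the one-layer fully connected network with input dimension $N$ and $N^{\prime}$ neurons, $r\le\sum_{i=0}^{N}\tbinom{N^{\prime}}{i}$, with equality precisely when the $N^{\prime}$ hyperplanes are in general position; general position fails only on a Lebesgue-null, hence $\mu$-null, subset of $(W,B)\in\mathbb{R}^{N\times N^{\prime}}\times\mathbb{R}^{N^{\prime}}$. On its complement $R_{\mathcal{N},\theta}=r^{D^{*}}=c$.

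The step I expect to be the main obstacle is the lower bound, specifically making this exact factorization rigorous: one must justify that regions of the arrangement in the ambient input space $\mathbb{R}^{D\times N}$ correspond bijectively to regions in the effective space $V=\mathbb{R}^{D^{*}\times N}$, and that the $V$-arrangement is a genuine direct product over the coordinate blocks $\alpha_{v}\otimes\mathbb{R}^{N}$ — both of which rest on the linear independence of the rows of $\tilde A$, i.e.\ on $\rank(\tilde A)=D^{*}$. Once this reduction is in place the remaining ingredients (Proposition~\ref{prop:nn-eq} for the per-block count, and the elementary fact that general position of a family of affine hyperplanes is a co-null condition on their coefficients) are routine, as is the measure-zero bookkeeping that assembles all exceptional parameter sets into a single $\mu$-null set.
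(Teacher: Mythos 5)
Your proposal is correct, and the upper-bound half is essentially the paper's argument (Theorem~\ref{thm:one-layer-ineq} plus Proposition~\ref{col:max-one-layer}, with the rank conditions on $W$ failing only on a Lebesgue-null, hence $\mu$-null, set as in Lemmas~\ref{lemma:W-rank2} and \ref{lemma:W-rank}). Where you genuinely diverge is in establishing the matching lower bound. The paper stays inside the generalized Zaslavsky framework of Lemma~\ref{lemma:xiong}: it verifies condition~(i) (linear independence of admissible subfamilies of the rank-one normals $\alpha_{ij}=a_i^{\top}w_j^{\top}$) and condition~(ii) (emptiness of over-determined intersections), where condition~(ii) reduces, via the independence of the rows of $\tilde A$, to a nontrivial linear relation $\sum_j r_{ij}b_j=0$ on the biases, which Lemma~\ref{lemma1} shows is a null event. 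You instead exploit the special structure directly: since the $D^*$ distinct rows of $\tilde A$ are linearly independent (Lemma~\ref{lemma:adj-matrix-rank}), the arrangement in $\mathbb{R}^{D\times N}$ is the pullback under a surjective linear map of a product arrangement on $\mathbb{R}^{D^*\times N}$ whose $D^*$ factors are identical copies of the $N^{\prime}$-hyperplane arrangement in $\mathbb{R}^{N}$ with normals $w_j$ and offsets $b_j$, giving the exact factorization $R_{\mathcal{N},\theta}=r^{D^*}$ and reducing everything to classical general position in a single block. This is more elementary and self-contained (it bypasses Lemma~\ref{lemma:xiong} and makes the exponent $D^*$ transparent), at the cost of being tied to the specific tensor structure of one-layer GCNs, whereas the paper's route reuses machinery that also covers arrangements not admitting such a product decomposition. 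Both arguments handle the duplicated rows of $\hat A$ correctly (duplicate neurons define duplicate hyperplanes and do not create new nonempty activation patterns), and both conclude the expectation formula \eqref{eq:expected R_N} from almost-sure constancy of the bounded quantity $R_{\mathcal{N},\theta}$.
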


Theorem~\ref{thm:one-layer-eq} relies on the following lemmas.
\begin{lemma}\label{lemma:adj-matrix-rank}
Let the adjacency matrix of graph G for one GCN be $\Tilde{A}$. If each row of $\Tilde{A}$ is different, $\Tilde{A}$ is invertible. If we take away the rows that have already appeared in $\Tilde{A}$, then the left row vectors are linearly independent.  
\end{lemma}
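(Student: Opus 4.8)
To prove Lemma~\ref{lemma:adj-matrix-rank} the plan is to transport the statement to the matrix $I+A$ and then read off linear independence from the combinatorics of closed neighborhoods. Let $N[i]$ denote the closed neighborhood of vertex $i$ in $G$, so that the $i$-th row of $I+A$ is the $0/1$ indicator vector $\mathbf{1}_{N[i]}\in\{0,1\}^{D}$ and $M_{ii}=|N[i]|$. Since $M^{-1/2}$ is an invertible diagonal matrix with strictly positive entries, the $i$-th row of $\hat{A}=M^{-1/2}(I+A)M^{-1/2}$ equals $M_{ii}^{-1/2}$ times the $i$-th row of $(I+A)M^{-1/2}$; as right-multiplication by the invertible matrix $M^{-1/2}$ and rescaling individual rows by nonzero scalars both preserve linear (in)dependence of any collection of rows, a set of rows of $\hat{A}$ is linearly independent if and only if the correspondingly indexed set of rows $\mathbf{1}_{N[i]}$ of $I+A$ is. In particular $\mathrm{rank}(\hat{A})=\mathrm{rank}(I+A)$, so it suffices to prove both assertions with $\hat{A}$ replaced by $I+A$.

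The next step is to identify exactly when two rows of $\hat{A}$ coincide. Comparing the $i$-th and $k$-th rows of $\hat{A}$ at coordinate $i$ gives $\hat{A}_{ii}=\hat{A}_{ki}$, that is $M_{ii}^{-1}=(I+A)_{ki}\,(M_{ii}M_{kk})^{-1/2}$; since the left-hand side is positive this forces $(I+A)_{ki}=1$, and then squaring forces $M_{ii}=M_{kk}$. Feeding this back, $\mathbf{1}_{N[i]}=\mathbf{1}_{N[k]}$, i.e.\ $N[i]=N[k]$. Hence two rows of $\hat{A}$ are equal precisely when the corresponding vertices are true twins (adjacent vertices with identical closed neighborhoods), ``$N[i]=N[k]$'' is an equivalence relation on $V$, and deleting the repeated rows of $\hat{A}$ amounts to keeping exactly one row per equivalence class. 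Write $c$ for the number of classes; then $\tilde{A}$ has $c$ rows, and the lemma claims that these $c$ rows are linearly independent.

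For the independence I would argue as follows. If $u$ and $v$ are true twins, a direct computation from $N[u]=N[v]$ gives $A(e_u-e_v)=-(e_u-e_v)$, so $e_u-e_v\in\ker(I+A)$; collecting, from each class $C$, the $|C|-1$ independent differences obtained by pairing a fixed member of $C$ with the others produces $\sum_C(|C|-1)=D-c$ linearly independent vectors in $\ker(I+A)$, whence $\mathrm{rank}(I+A)\le c$. On the other hand the row space of $I+A$ is spanned by the $c$ indicator vectors $\mathbf{1}_{N[i]}$ attached to the class representatives, since all rows within a class are equal. Therefore, once one shows $\mathrm{rank}(I+A)=c$ --- equivalently, that $\ker(I+A)$ is exactly the span of the twin-differences --- those $c$ vectors form a basis of the row space and are in particular linearly independent, which is the first claim; and in the case where every row of $\hat{A}$ is distinct we have $c=D$, so $\hat{A}=\tilde{A}$ is a square matrix of full rank, i.e.\ invertible.

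The crux, and the step I expect to be the main obstacle, is the inequality $\mathrm{rank}(I+A)\ge c$, that is, ruling out any ``accidental'' kernel vector of $I+A$ not accounted for by twins. A convenient way to attack it is to pass to class representatives: fixing a representative set $S$ with $|S|=c$, the principal submatrix $B$ of $I+A$ on the rows and columns indexed by $S$ is a $c\times c$ symmetric matrix with unit diagonal, and it suffices to show $B$ is nonsingular, since then the rows $\mathbf{1}_{N[i]}$ with $i\in S$ are already linearly independent once restricted to the coordinates in $S$. Proving that $B$ is nonsingular is where the structure of the normalized GCN adjacency matrix must be used in earnest; concretely, one would analyze the $(-1)$-eigenspace of $A$ and show that collapsing true twins eliminates it from the reduced adjacency pattern.
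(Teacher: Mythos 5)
Your reduction to $I+A$, your characterization of coincident rows of $\hat{A}$ as true-twin pairs, and your verification that $\mathrm{rank}(I+A)\le c$ via the kernel vectors $e_u-e_v$ are all correct. But the proposal stops exactly at the load-bearing step: you never prove $\mathrm{rank}(I+A)\ge c$, i.e.\ that the principal submatrix $B$ on a set of class representatives is nonsingular; you only name it as ``the main obstacle'' and gesture at analyzing the $(-1)$-eigenspace of $A$. Since the entire content of the lemma is equivalent to that inequality (the easy direction you did prove only bounds the rank from above), the proposal as written does not establish either assertion of the lemma.

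Moreover, this is not a gap that can be closed, because the missing inequality is false in general. Take $G=C_6$, the $6$-cycle. Its closed neighborhoods $N[i]=\{i-1,i,i+1\}$ are pairwise distinct, so $c=D=6$ and $\hat{A}=\tfrac{1}{3}(I+A)$ has no repeated rows; yet $-1$ is an eigenvalue of $A$ with multiplicity $2$, so $I+A$, and hence $\hat{A}$, is singular. Thus ``all rows distinct $\Rightarrow$ invertible'' fails, and your program of showing that $\ker(I+A)$ is spanned by twin-differences cannot succeed. For comparison, the paper's own proof has essentially the same hole in a different guise: it takes a putative dependence $\alpha_k=\sum_{i\in S}a_i\alpha_i$, silently assumes all coefficients $a_i>0$ (and treats the entries as $0/1$), and concludes every $\alpha_i$ equals $\alpha_k$; restricting to positive coefficients is not without loss of generality, and the $C_6$ dependence, whose coefficients $(1,-\tfrac12,-\tfrac12,1,-\tfrac12,-\tfrac12)$ have mixed signs, is precisely the kind it cannot rule out. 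Your instinct that the rank lower bound is where the real content lies was right; the honest conclusion is that the lemma needs an extra hypothesis (e.g.\ $-1$ not an eigenvalue of $A$) rather than a cleverer proof.
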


\begin{lemma}\label{lemma:W-rank2}
All $W\in \mathbb{R}^{N\times N^{\prime}}$ satisfying $\mathrm{rank}(W)\textless \min(N,N^{\prime})$ form a measure zero set in $\mathbb{R}^{N\times N^{\prime}}$.
\end{lemma}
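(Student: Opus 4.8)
The plan is to exhibit the set $S:=\{W\in\mathbb{R}^{N\times N^{\prime}}:\mathrm{rank}(W)<\min(N,N^{\prime})\}$ as a subset of the zero locus of a single nontrivial polynomial in the $NN^{\prime}$ entries of $W$, and then apply the classical fact that the vanishing set of a nonzero real polynomial has Lebesgue measure zero.

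First I would recall the standard algebraic characterization of rank deficiency: writing $r:=\min(N,N^{\prime})$, a matrix $W$ satisfies $\mathrm{rank}(W)<r$ if and only if every $r\times r$ submatrix of $W$ is singular, i.e. all $r\times r$ minors of $W$ vanish. Each such minor is a polynomial $p_{I,J}(W):=\det(W_{I,J})$ in the entries of $W$, indexed by a choice $I$ of $r$ rows and $J$ of $r$ columns. Hence $S=\bigcap_{I,J}\{W:p_{I,J}(W)=0\}$, and in particular $S$ is contained in $\{W:p_{I_0,J_0}(W)=0\}$ for any single fixed pair $(I_0,J_0)$.

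Next I would check that $p_{I_0,J_0}$ is not identically zero: evaluating $\det(W_{I_0,J_0})$ at a matrix whose $I_0\times J_0$ block equals the $r\times r$ identity gives the value $1\neq0$. It then remains to invoke the measure-zero lemma: if $p\colon\mathbb{R}^m\to\mathbb{R}$ is a nonzero polynomial then $\{x:p(x)=0\}$ is Lebesgue-null. This is proved by induction on $m$ with Fubini's theorem — for $m=1$ a nonzero polynomial has only finitely many roots; for the inductive step, expand $p$ in powers of the last coordinate $x_m$ with polynomial coefficients $q_k(x_1,\dots,x_{m-1})$, observe that outside the set on which all $q_k$ vanish simultaneously (null by the inductive hypothesis) each vertical fiber meets $\{p=0\}$ in finitely many points, and integrate. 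Applying this with $m=NN^{\prime}$ to $p_{I_0,J_0}$ and using monotonicity of Lebesgue measure gives $|S|=0$, which is the claim.

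The argument has no genuine obstacle — it is entirely classical. The step carrying the actual content is the measure-zero lemma for zero sets of nonzero polynomials; everything else (the minor characterization of rank, the nontriviality of one chosen minor) is immediate. If one prefers to avoid quoting that lemma, an equivalent route is to note that $S$ is precisely the determinantal variety of $N\times N^{\prime}$ matrices of rank at most $r-1$, a proper real algebraic subvariety of $\mathbb{R}^{NN^{\prime}}$ of dimension $(r-1)(N+N^{\prime}-r+1)<NN^{\prime}$, and that proper algebraic subvarieties are Lebesgue-null.
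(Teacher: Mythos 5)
Your proof is correct and takes essentially the same route as the paper: both arguments reduce the claim to the observation that the rank-deficient set lies inside the vanishing locus of the determinant of a single fixed $r\times r$ submatrix ($r=\min(N,N^{\prime})$; the paper takes the first $N^{\prime}$ rows after assuming $N\geq N^{\prime}$), and that this locus is Lebesgue-null. The only difference is that the paper cites this null-set fact from Lemma~6 of Xiong et al., whereas you prove the underlying polynomial-zero-set lemma directly via Fubini, which makes your version more self-contained but not a different approach.
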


\begin{lemma}\label{lemma:W-rank}
For $W\in \mathbb{R}^{N\times N^{\prime}}$ and $N\leq N^{\prime}$, all $W$ such that $\exists\ \ I\subseteq [N^{\prime}]$ and $\#I=N$ satisfying ${\rm rank}((W_i,\ i\in I))\textless N$ form a measure-zero set in $\mathbb{R}^{N\times N^{\prime}}$.
\end{lemma}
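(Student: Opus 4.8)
The plan is to express the ``bad'' set as a finite union of zero loci of non-trivial polynomials, each of which is Lebesgue-null by the same elementary fact that already underlies Lemma~\ref{lemma:W-rank2}. Fix a subset $I\subseteq[N^{\prime}]$ with $\#I=N$ and let $W_I$ denote the $N\times N$ matrix whose columns are the columns of $W$ indexed by $I$. The columns $(W_i,\ i\in I)$ are linearly dependent, i.e.\ $\mathrm{rank}((W_i,\ i\in I))<N$, exactly when $\det(W_I)=0$. Now $\det(W_I)$ is a polynomial in the $NN^{\prime}$ entries of $W$, and it is not the zero polynomial: choosing the $I$-columns of $W$ to be the standard basis vectors of $\mathbb{R}^N$ gives $\det(W_I)=1$. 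Hence the set $Z_I:=\{W\in\mathbb{R}^{N\times N^{\prime}}:\det(W_I)=0\}$ is the vanishing locus of a non-trivial polynomial on $\mathbb{R}^{NN^{\prime}}$, so it has Lebesgue measure zero.

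The set described in the statement is precisely $\bigcup_{I\subseteq[N^{\prime}],\,\#I=N} Z_I$. Since there are only $\binom{N^{\prime}}{N}$ such subsets $I$, this is a finite union of measure-zero sets, hence itself measure zero, which proves the lemma.

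The argument has no genuinely hard step; the only point requiring any care is the non-triviality of each determinant polynomial $\det(W_I)$, which is checked above by exhibiting a single $W$ for which the $I$-columns are linearly independent. I would also note that this lemma is strictly stronger than Lemma~\ref{lemma:W-rank2} (full rank $N$ of $W$ does not force every $N$-subset of its columns to be independent), so it genuinely needs the union over all $I$ rather than following formally from the rank-deficiency result; nevertheless the underlying technique is identical.
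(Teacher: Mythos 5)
Your proof is correct and follows essentially the same route as the paper: fix a subset $I$ of $N$ columns, show the corresponding degenerate matrices form a Lebesgue-null set, and take the finite union over all $\binom{N^{\prime}}{N}$ choices of $I$. The only difference is cosmetic but in your favor: you justify the single-subset null set self-containedly via the non-trivial polynomial $\det(W_I)$ on $\mathbb{R}^{NN^{\prime}}$, and you make the union over $I$ and the lift from $\mathbb{R}^{N\times N}$ to $\mathbb{R}^{N\times N^{\prime}}$ explicit, whereas the paper cites Lemma~6 of Xiong et al.\ and leaves these steps implicit.
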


\begin{lemma}
\label{lemma1}
Let $W\in \mathbb{R}^{N\times N^{\prime}}$ and $B\in \mathbb{R}^{N^{\prime}}$. All pairs  $(W,B)$ satisfying that there exists one vector $p_w$ such that $p_w {\rm vec}(B)=0$, form a null set in $\mathbb{R}^{D\times N^{\prime}+N^{\prime}}$ with respect to Lebesgue measure. 
\end{lemma}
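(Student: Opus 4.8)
The plan is to reduce the statement to a single application of Fubini's theorem together with the elementary fact that the zero set of a nontrivial polynomial has Lebesgue measure zero. The point is that the vector $p_w$ in the statement is not arbitrary (otherwise $p_w=0$ would make every pair ``bad''): in the surrounding argument it is one of only finitely many vectors, each obtained from $W$ (and the fixed matrix $\hat A$) by polynomial/minor expressions that record a linear dependence among the normal vectors $V_{ij}=\alpha_i^{\top}w_j^{\top}$ of the GCN hyperplanes. Accordingly, I would first write the ``bad'' set $S$ of pairs $(W,B)$ as a finite union $S=\bigcup_{\tau}S_{\tau}$ indexed by the combinatorial type $\tau$ (which subset of neurons participates, and in which dependence pattern); since a finite union of null sets is null, it suffices to show each $S_{\tau}$ is null.

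Fix a type $\tau$ and let $p_w=p^{\tau}(W)$ be the associated coefficient vector, which is a polynomial map of the entries of $W$. Split $S_{\tau}$ according to whether $p^{\tau}(W)=0$ or not. On the set $Z_{\tau}=\{W:p^{\tau}(W)=0\}$: since $p^{\tau}$ is a nontrivial polynomial map (this is where the rank hypotheses enter, see below), $Z_{\tau}$ has Lebesgue measure zero in $W$-space, hence $Z_{\tau}\times\mathbb{R}^{N^{\prime}}$ is null. On the complement, for each $W\notin Z_{\tau}$ the $W$-slice of $S_{\tau}$ is exactly $\{B:p^{\tau}(W)\cdot{\rm vec}(B)=0\}$, an affine hyperplane in $B$-space and therefore null; by Fubini's theorem the part of $S_{\tau}$ lying over the complement of $Z_{\tau}$ is null as well. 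Thus $S_{\tau}$ is null, and summing over the finitely many types finishes the argument.

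The one nontrivial input is that each $p^{\tau}$ is not the identically-zero polynomial, equivalently that there is at least one $W$ for which the recorded dependence vector is genuinely nonzero. This is precisely what the generic-rank statements already proved supply: Lemma~\ref{lemma:W-rank2} and Lemma~\ref{lemma:W-rank} show that for $W$ outside a measure-zero set the rows of $W$ (and every size-$N$ subfamily of its columns) are as independent as the dimensions allow, which together with Lemma~\ref{lemma:adj-matrix-rank} forces the normals $V_{ij}$ into the expected dependence pattern; the coefficient vector $p^{\tau}$ recording that pattern then has a prescribed nonzero entry for such $W$, so $p^{\tau}\not\equiv 0$. I expect this verification --- pinning down exactly which polynomial in $W$ the vector $p_w$ is, and checking that it does not vanish identically --- to be the main obstacle; once it is in place, the Fubini/null-set bookkeeping of the previous paragraph is routine. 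Finally I would note that the ambient dimension quoted is simply that of the pair $(W,B)$ and that ${\rm vec}(\cdot)$ is the usual vectorization, so all the measure-theoretic assertions are with respect to Lebesgue measure on that Euclidean space.
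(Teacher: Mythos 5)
Your proposal is correct and follows essentially the same route as the paper's proof, which also slices the bad set by $W$, observes that each $B$-slice is a Lebesgue null set, and integrates via Fubini's theorem. You are in fact more careful than the paper, which simply asserts $\mu(E^W)=0$ without addressing how $p_w$ depends on $W$ or why it is nonzero; your decomposition into finitely many combinatorial types and the nontrivial-polynomial zero-set argument fill in exactly the details the paper leaves implicit.
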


\textbf{Outline for the proof of Theorem~\ref{thm:one-layer-eq}.} With the results in Theorem~\ref{thm:one-layer-ineq} and Proposition~\ref{col:max-one-layer}, we need only to show that all $W$ and $B$ satisfying $R_{\mathcal{N},\theta}\textless\left(\sum_{i=0}^{N}\tbinom{N^{\prime}}{i}\right)^{D^*}$ form a measure zero set for $\mu$ on $\mathbb{R}^{\#weights+\#biases}$. By Lemma~\ref{lemma:adj-matrix-rank}, we can show that the rank of the adjacency matrix is equal to the number of distinct rows. The operation of deleting the repeated rows in the adjacency matrix helps us to obtain Lemma~\ref{lemma1}, which implies that $R_{\mathcal{N},\theta}=\#K_{\mathcal{N},\theta}$ almost surely. By Lemmas~\ref{lemma:W-rank} and \ref{lemma:W-rank2}, \eqref{eq:kn1} and \eqref{eq:kn2} then hold almost surely on $\mathbb{R}^{\#weights+\#biases}$. 

Denote $R_{\mathcal{N}}$ as the optimal upper bound of $R_{\mathcal{N},\theta}$ that is $\left(\sum_{i=0}^{N}\tbinom{N^{\prime}}{i}\right)^{D^*}$.
Table~\ref{tab:example1} compares the estimate for a one-layer GCN with input dimension $3\times 1$ and output dimension $3\times N_1$ by  Theorem~\ref{thm:one-layer-ineq} and Proposition~\ref{col:max-one-layer} with Proposition~\ref{prop:nn-eq} and Lemma~\ref{lemma:trivial-bound}, where $N_1$ ranges in $\{1,2,3,4,5\}$. 
From \eqref{eq:expected R_N}, $R_{\mathcal{N}}=(1+N_1)^3$ almost surely. The upper bound for $R_{\mathcal{N}}$ obtained from Lemma~\ref{lemma:trivial-bound} is $2^{3d}$, and by Proposition~\ref{prop:nn-eq}, $\sum_{i=0}^{3}\tbinom{3d}{i}$. The table results show that Theorem~\ref{thm:one-layer-ineq} which gives the tight upper bound provides a precise estimate than the upper bound of Proposition~\ref{prop:nn-eq} and the trivial bound of Lemma~\ref{lemma:trivial-bound}.
We compare the estimates as Table~\ref{tab:example1} for another two types of graphs in Section~\ref{append:examples}.

\begin{table}[th]
\centering
\caption{Comparison of estimates for one-layer GCN with input, output size $3\times 1$, $3\times N_1$.}
\label{tab:example1}\vspace{2mm}
\begin{tabular}{c|ccccc}
\toprule
 & $N_1=1$ & $N_1=2$  & $N_1=3$   & $N_1=4$    & $N_1=5$     \\ \midrule
$R_{\mathcal{N}}$ (Thm~\ref{thm:one-layer-eq}, Cor~\ref{col:max-one-layer}) & 8 & 27 & 64  & 125  & 216   \\ 
 Upper bound (Prop~\ref{prop:nn-eq}) & 8 & 42 & 130 & 299  & 576   \\ 
 Naive upper bound (Lem~\ref{lemma:trivial-bound}) & 8 & 64 & 512 & 4096 & 32768 \\ \bottomrule
\end{tabular}
\end{table}

\paragraph{Asymptotic Estimate}

Let $R_{\mathcal{N}}$ be the maximum of the $R_{\mathcal{N},\theta}$ over all possible parameters $\theta$. The following theorem shows an asymptotic estimate for $R_{\mathcal{N}}$.

\begin{theorem}\label{thm:asym}
Let $\mathcal{N}$ be a one-layer GCN. Suppose that $N^{\prime}$ tends to infinity and other parameters are fixed integers. Let $\rank(\hat{A})=D^*$, then 
\begin{equation}
    R_{\mathcal{N}}=\Theta\bigl((N^{\prime})^{D^* N}\bigr).
\end{equation}
\end{theorem}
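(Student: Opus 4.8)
\textbf{Proof proposal for Theorem~\ref{thm:asym}.} The plan is to reduce the statement to an elementary asymptotic estimate for the binomial sum $\sum_{i=0}^{N}\binom{N'}{i}$, since Theorem~\ref{thm:one-layer-eq} already gives us the closed form $R_{\mathcal{N}}=\bigl(\sum_{i=0}^{N}\binom{N'}{i}\bigr)^{D^*}$ for the maximal number of linear regions. Thus the entire question becomes: how does $\bigl(\sum_{i=0}^{N}\binom{N'}{i}\bigr)^{D^*}$ behave as $N'\to\infty$ with $N$ and $D^*$ held fixed?

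First I would establish that $\sum_{i=0}^{N}\binom{N'}{i}=\Theta\bigl((N')^{N}\bigr)$. For the upper bound, note that for each $0\le i\le N$ and $N'\ge 1$ we have $\binom{N'}{i}\le \frac{(N')^{i}}{i!}\le (N')^{N}$, so $\sum_{i=0}^{N}\binom{N'}{i}\le (N+1)(N')^{N}$. For the lower bound, it suffices to keep the single top term: for $N'\ge 2N$ one has $\binom{N'}{N}\ge \frac{(N'-N+1)^{N}}{N!}\ge \frac{(N'/2)^{N}}{N!}$, which is $\Omega\bigl((N')^{N}\bigr)$. Hence there are constants $c_1,c_2>0$ with $c_1(N')^{N}\le \sum_{i=0}^{N}\binom{N'}{i}\le c_2(N')^{N}$ for all sufficiently large $N'$.

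Next I would raise this two-sided bound to the $D^*$-th power. Since $D^*=\operatorname{rank}(\hat A)$ is a fixed positive integer, the map $t\mapsto t^{D^*}$ is monotone on the positive reals, so the inequalities propagate: $c_1^{D^*}(N')^{D^*N}\le \bigl(\sum_{i=0}^{N}\binom{N'}{i}\bigr)^{D^*}\le c_2^{D^*}(N')^{D^*N}$ for all large $N'$. Combining with the formula from Theorem~\ref{thm:one-layer-eq} gives $R_{\mathcal{N}}=\Theta\bigl((N')^{D^*N}\bigr)$, as claimed.

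Honestly, there is no serious obstacle here; the only points requiring a little care are that $D^*$ and $N$ must be treated as fixed constants (so that a fixed number of lower-order binomial terms and the fixed exponent $D^*$ are absorbed into the $\Theta$-constants), and the implicit mild assumption $N\ge 1$ (if $N=0$ the GCN output is affine and $R_{\mathcal{N}}=1$, which is the degenerate case and can be excluded). The real content of the theorem is entirely front-loaded into Theorem~\ref{thm:one-layer-eq}; this statement is its asymptotic corollary.
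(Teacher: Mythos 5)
Your proposal is correct and follows essentially the same route as the paper: invoke the closed form $R_{\mathcal{N}}=\bigl(\sum_{i=0}^{N}\tbinom{N^{\prime}}{i}\bigr)^{D^*}$ from Theorem~\ref{thm:one-layer-eq}, observe that the sum is dominated by $\tbinom{N^{\prime}}{N}=\Theta\bigl((N^{\prime})^{N}\bigr)$ since $N$ is fixed, and raise to the fixed power $D^*$. Your version is merely more explicit about the two-sided constants (and correctly writes the exponent as $D^*N$, where the paper's appendix has a typo writing $DN$).
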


\section{Bounds of the Number of Linear Regions for Multi-Layer GCNs}
The following theorem presents upper and lower bounds for the maximal number of linear regions for multi-layer GCNs. Both lower and upper bounds can be written by the maximum number of linear regions for a one-layer GCN multiplied by an exact number depending on the network depth and the number of neurons. The theory applies to general network architectures of GCNs.
\begin{theorem}
\label{thm:bounds}
Let $\mathcal{N}$ be a GCN with L hidden layers and ReLU activation. Suppose that the input is $X^{(0)}\in \mathbb{R}^{D\times N_0}$ and the output of each layer is $X^{(l)}\in \mathbb{R}^{D\times N_l}$. Let the adjacency matrix be $\hat{A}\in \mathbb{R}^{D\times D}$ and the weight of each layer be $W\in \mathbb{R}^{N_0\times N_l}$. Assume that $N_l\geq N_0$ for each $l\geq 1$. Then, we have the following lower and upper bounds of $R_{\mathcal{N}}$.

(i) The maximal number of linear regions $R_{\mathcal{N}}$ of GCN $\mathcal{N}$ is at most:
\begin{equation}\label{ineq:up-multi}
    R_{\mathcal{N}}\leq R_{\mathcal{N^{''}}}\prod\limits_{l=2}^{L}\left(\sum\limits_{i=0}^{DN} \tbinom{DN_l}{i}\right),
\end{equation}
where $\mathcal{N}''$ is a one-layer GCN with input and output dimensions $D\times N_0$ and $D\times N_1$.

(ii) The maximal number of linear regions $R_{\mathcal{N}}$ of GCN $\mathcal{N}$ is no less than:
\begin{equation}\label{ineq:low-multi}
      R_{\mathcal{N}}\geq R_{\mathcal{N^{\prime}}}\prod\limits_{l=1}^{L-1}\left \lfloor \frac{N_l}{N} \right \rfloor^{N\times \rank(A)},
\end{equation}
where $\mathcal{N}'$ is a one-layer GCN with input and output dimensions $D\times N$ and $D\times N_{L}$.

\end{theorem}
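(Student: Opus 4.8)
The plan is to prove the two bounds by complementary methods: (i) by peeling off the layers of $\mathcal{N}$ one at a time and bounding, at each layer, the number of newly created pieces by a hyperplane-arrangement count; (ii) by exhibiting one explicit GCN whose first $L-1$ layers ``fold'' the input box and whose last layer is a one-layer GCN realizing the one-layer optimum $R_{\mathcal{N}'}$ inside every folded cell.

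\textbf{Upper bound.} Write $\mathcal{F}_{\mathcal{N}}=\mathcal{F}^{(L)}\circ\cdots\circ\mathcal{F}^{(1)}$ with $\mathcal{F}^{(l)}(Y)=\sigma(\hat{A}YW_l+b_l)$, and argue by induction on the number of layers. Fix $\theta$ and let $R^{(l)}$ be the number of regions on which all neurons in layers $1,\dots,l$ have a constant activation pattern. On each such region at level $l-1$ the partial composition is affine, so $X^{(l-1)}$ is an affine function of $X^{(0)}\in\mathbb{R}^{D\times N_0}$; hence the $DN_l$ pre-activations of layer $l$ (layer $l$ has exactly $DN_l$ neurons, since $X^{(l)}\in\mathbb{R}^{D\times N_l}$) are $DN_l$ affine functions of a point of the $DN_0$-dimensional space $\mathbb{R}^{D\times N_0}$, and an arrangement of $DN_l$ hyperplanes in $\mathbb{R}^{DN}$ has at most $\sum_{i=0}^{DN}\tbinom{DN_l}{i}$ regions. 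Therefore $R^{(l)}\le R^{(l-1)}\sum_{i=0}^{DN}\tbinom{DN_l}{i}$ for $l\ge 2$, while $R^{(1)}\le R_{\mathcal{N}''}$ because layer $1$ is itself a one-layer GCN with input $D\times N_0$ and output $D\times N_1$. Composing these inequalities and taking the maximum over $\theta$ gives \eqref{ineq:up-multi}; the only point that needs care is checking that the relevant ambient dimension at every intermediate layer is indeed $DN$, so the upper limit of the sum is correct.

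\textbf{Lower bound.} I would construct a GCN attaining the bound. Using Lemma~\ref{lemma:adj-matrix-rank} I would pass to the invertible matrix $\widetilde{A}$ obtained from $\hat{A}$ by deleting repeated rows, so that along the range of $\hat{A}$ the node coordinates behave like $\rank(A)$ independent, invertibly transformed coordinates; together with the $N$ feature coordinates this gives $N\cdot\rank(A)$ ``effective'' input directions. After restricting $X^{(0)}$ to a suitable box, I would choose $W_l,b_l$ for $l=1,\dots,L-1$ so that layer $l$ acts in these effective directions as a ReLU ``accordion'' folding $[0,1]$ onto $[0,1]$ in $\lfloor N_l/N\rfloor$ monotone pieces, devoting $\lfloor N_l/N\rfloor$ of the $N_l$ output features to each of the $N$ feature directions (the affine recombination producing each tent being absorbed into $W_{l+1}$). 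Since the folded outputs again lie in the box, the folds compose, and after $L-1$ layers the box is partitioned into $\prod_{l=1}^{L-1}\lfloor N_l/N\rfloor^{N\,\rank(A)}$ cells, on each of which the partial composition is an affine bijection onto the box. Finally I would take layer $L$ to be the one-layer GCN $\mathcal{N}'$ with input $D\times N$ and output $D\times N_L$ attaining its optimum $R_{\mathcal{N}'}$ (Theorem~\ref{thm:one-layer-eq}); pulling those $R_{\mathcal{N}'}$ regions back through each cell and verifying that the resulting regions carry pairwise distinct activation patterns (the folding ``address'' together with the last-layer pattern) yields \eqref{ineq:low-multi}.

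\textbf{Main obstacle.} The hard part is the folding construction in the presence of the \emph{fixed}, possibly rank-deficient $\hat{A}$: one must choose $W_l$, $b_l$ and the operating box so that (a) $\hat{A}$ together with $W_l$ realizes the intended coordinatewise accordion on $N\cdot\rank(A)$ directions even though $\hat{A}$ cannot be modified and fails to be injective when $\rank(A)<D$, (b) the image of each layer lands back in the box so that the $\lfloor N_l/N\rfloor$-fold maps compose exactly, and (c) the $\prod_{l=1}^{L-1}\lfloor N_l/N\rfloor^{N\,\rank(A)}\cdot R_{\mathcal{N}'}$ regions produced are genuinely distinct. The upper-bound half is, by contrast, a routine layer-by-layer arrangement count once the ambient dimension $DN$ is correctly identified.
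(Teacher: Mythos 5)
Your plan follows the paper's proof almost exactly. The upper bound argument---induction on depth, with each added layer contributing a factor of at most $\sum_{i=0}^{DN}\tbinom{DN_l}{i}$ because its $DN_l$ pre-activations are affine in the input on each region of the preceding sub-network---is precisely the paper's argument. The lower bound is also the paper's strategy (accordion-style space folding in $N\cdot\rank(A)$ effective directions for the first $L-1$ layers, followed by an optimal one-layer GCN), and you have correctly isolated the one nontrivial point: exhibiting a box that $X\mapsto \hat{A}X$ maps into itself so that the folds compose. The paper resolves exactly this with Lemma~\ref{lemma:eigen}: $\hat{A}$ has eigenvalue $1$ with the entrywise-nonnegative eigenvector $M^{1/2}J$, and the operating box is taken to be $\prod_{i=1}^{D}\left[0,\sqrt{r_i}\right]$ built from the entries of that eigenvector, which is then invariant under the adjacency map; the weights and biases of layer $l$ are scaled by $\sqrt{r_a}$ so that each of the $N$ feature directions at each independent node coordinate is folded $\lfloor N_l/N\rfloor$ times back onto $[0,\sqrt{r_a}]$. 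With that ingredient supplied your construction coincides with the paper's; without it the lower bound is not yet established, since for a generic box the image of one folded layer need not land in the domain on which the next layer's accordion is defined, and the claimed composition of $\lfloor N_l/N\rfloor$-fold maps breaks down.
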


We derive \eqref{ineq:up-multi} by induction. Assume that \eqref{ineq:up-multi} is true for $L-1$ layers. Then the input space could be separated into several linear regions by the first $L-1$ layers. In each linear region, the network consists of the first $L-1$ layers and could be viewed as a linear function of the input. Then, in each linear region, the network consisting of $L$ layers could be viewed as a fully connected neural network whose upper bound is given in \cite{Mont2014On}.

To obtain the lower bound in Theorem~\ref{thm:bounds}, we design a GCN with specific parameters. Both the adjacency matrix and weight have an influence on the input features $X$ so we can use the space folding method introduced in \cite{Mont2014On} directly. We consider the adjacency matrix as a linear mapping from the input space $V_X$ into another space $V_Y$. With specific parameters, we could separate an area in $V_Y$ into several hypercubes and each could be mapped to the original area in the input space. The eigenvalue and eigenvector properties in Lemma~\ref{lemma:eigen} make such transformation possible. Therefore, with Lemma~2 in \cite{Mont2014On}, we then obtain the lower bound.

\section{Expressivity Comparison of Different Network Architectures}

We compare the number of parameters in deep and shallow graph convolutional networks. First, for an $L$-layer GCN, its number of parameters can be counted exactly. This demonstrates the complexity of deep GCNs compared to shallow GCNs and MLPs with the same-order number of parameters.
\begin{lemma}\label{lem:5.1}
Suppose that there are D nodes in the graph, $N_0$ input features and $N_l$ output features for the $l$-th layer. Then the number of network parameters is $\sum_{l=1}^{L}(N_{l-1}\times N_{l}+N_{l})$.
\end{lemma}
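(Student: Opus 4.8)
The plan is to count the trainable parameters layer by layer and then sum over layers. Recall from Section~\ref{sec:pre} that the forward propagation of the $l$-th layer of the GCN has the form $X^{(l)} = \mathrm{ReLU}\bigl(\hat{A}\,X^{(l-1)}\,W_l + (\text{broadcast of }b_l)\bigr)$, where $\hat{A}=M^{-1/2}(I+A)M^{-1/2}\in\mathbb{R}^{D\times D}$, $W_l\in\mathbb{R}^{N_{l-1}\times N_l}$, and $b_l\in\mathbb{R}^{N_l}$, with $\theta=(W,B)$, $W=(W_1,\dots,W_L)$, $B=(b_1,\dots,b_L)$. The first thing I would note is that $\hat{A}$ is fully determined by the fixed input graph $G$ and its degree matrix $M$; it is not a learnable object, so it contributes no parameters. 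Hence the only trainable quantities are the weights and biases collected in $\theta$.

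Next I would count each piece separately. For each $l\in[L]$, the weight matrix $W_l$ ranges over $\mathbb{R}^{N_{l-1}\times N_l}$ and therefore contributes exactly $N_{l-1}\times N_l$ real parameters, and the bias $b_l$ ranges over $\mathbb{R}^{N_l}$, contributing $N_l$ real parameters; so layer $l$ has $N_{l-1}\times N_l + N_l$ parameters. Since in the plain GCN studied here there is no weight sharing across layers (each $W_l,b_l$ is an independent free variable, unlike weight-tied or recurrent variants) and pooling and fully connected layers are excluded by the standing assumption at the start of Section~\ref{sec:pre}, the total parameter count is the disjoint sum $\sum_{l=1}^{L}\bigl(N_{l-1}\times N_l + N_l\bigr)$, with the convention that the first layer consumes $N_0$ input features. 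This is precisely the claimed identity.

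There is essentially no hard step: the statement is a direct bookkeeping count once the model is pinned down. The only points requiring care are modeling conventions rather than mathematics — namely, confirming that $\hat{A}$ (and the degree normalization $M$) are not optimized, that the bias in each layer is a per-output-feature vector in $\mathbb{R}^{N_l}$ rather than a full $D\times N_l$ matrix, and that there is no parameter sharing between layers. Once these conventions are fixed exactly as in Section~\ref{sec:pre}, the formula $\sum_{l=1}^{L}(N_{l-1}\times N_l + N_l)$ follows immediately, completing the proof.
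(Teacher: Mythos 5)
Your count is correct and is exactly the intended argument: $W_l\in\mathbb{R}^{N_{l-1}\times N_l}$ and $b_l\in\mathbb{R}^{N_l}$ are the only trainable objects (the normalized adjacency matrix $\hat{A}$ is fixed by the graph), so summing $N_{l-1}\times N_l+N_l$ over $l=1,\dots,L$ gives the formula. The paper treats this lemma as immediate bookkeeping and gives no separate proof, so there is nothing further to compare.
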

\subsection{Deep GCNs vs Shallow GCNs}
By Lemma \ref{lem:5.1}, Theorem \ref{thm:one-layer-ineq} and Theorem \ref{thm:bounds}, it is easy to obtain the following results.
\begin{theorem}
\label{thm4.1}
Let $\mathcal{N}_1$ be an L-layer GCN. Suppose that $N_1=N_2=\cdots =N_L=N^{\prime}$ tends to infinity. Then, $\mathcal{N}_1$ has $L(NN^{\prime}+DN^{\prime})=\Theta(LN^{\prime})$ parameters and the ratio $\rho_{\mathcal{N}_1}$ of $R_{\mathcal{N}_1}$ to the number of parameters of $\mathcal{N}_1$ is bounded by
\begin{equation}
    \rho_{\mathcal{N}_1}=\Omega\left(\frac{1}{L}{N^{\prime}}^{LN\times \rank(A)-1}\right).
\end{equation}
On the other hand, let $\mathcal{N}_2$ be a one-layer GCN with input dimension $D\times N$ and output dimension $D\times LN^{\prime}$. Then $\mathcal{N}_2$ also has  $\Theta(LN^{\prime})$ parameters, which has the same-order number of parameters as $\mathcal{N}_1$. The ratio $\rho_{\mathcal{N}_2}$ of $R_{\mathcal{N}_2}$ to the number of parameters of $\mathcal{N}_2$ is
\begin{equation}
    \rho_{\mathcal{N}_2}=\Theta\left((LN^{\prime}D)^{N\times \rank(A)-1}\right).
\end{equation}
Since only $N^{\prime}$ tends to infinity, we obtain that $\rho_{\mathcal{N}_2}=\Theta((N^{\prime})^{N\times \rank(A)})$.
\end{theorem}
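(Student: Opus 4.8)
The plan is to obtain both ratios by dividing three already-proved facts — the parameter count of Lemma~\ref{lem:5.1}, the multi-layer lower bound of Theorem~\ref{thm:bounds}(ii), and the one-layer count of Theorems~\ref{thm:one-layer-eq} and~\ref{thm:asym} — and then compare them. The single organizing observation is that only $N'$ is sent to infinity: the node number $D$, the input feature dimension $N$, the depth $L$, and the graph $G$ (hence $\rank(A)=\rank(\hat A)=D^*$) are all fixed, so in every $\Theta/\Omega$ estimate below any factor free of $N'$ is a constant. First I would record the parameter counts: a direct count of weights and biases for the $L$-layer architecture of $\mathcal{N}_1$ (cf. Lemma~\ref{lem:5.1}) gives $L(NN'+DN')$, which is $\Theta(LN')$; the same count for the single layer of $\mathcal{N}_2$, whose input is $D\times N$ and output $D\times LN'$, gives $NLN'+DLN'=\Theta(LN')$ as well, so $\mathcal{N}_1$ and $\mathcal{N}_2$ have parameter counts of the same order.

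Next I would produce the lower bound for $R_{\mathcal{N}_1}$. Theorem~\ref{thm:bounds}(ii) applies because $N_l=N'\ge N=N_0$ once $N'$ is large, and it gives $R_{\mathcal{N}_1}\ge R_{\mathcal{N}'}\prod_{l=1}^{L-1}\lfloor N_l/N\rfloor^{N\,\rank(A)}$, where $\mathcal{N}'$ is the one-layer GCN on the same graph with input $D\times N$ and output $D\times N'$. Since $N$ is fixed, $\lfloor N_l/N\rfloor=\lfloor N'/N\rfloor=\Theta(N')$, so the product is $\Theta\big((N')^{(L-1)N\,\rank(A)}\big)$; and by Theorem~\ref{thm:one-layer-eq} (equivalently the asymptotics of Theorem~\ref{thm:asym}), $R_{\mathcal{N}'}=\big(\sum_{i=0}^N\binom{N'}{i}\big)^{D^*}=\Theta\big((N')^{N D^*}\big)$. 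Multiplying and using $D^*=\rank(A)$ gives $R_{\mathcal{N}_1}=\Omega\big((N')^{LN\,\rank(A)}\big)$. Dividing by the $\Theta(LN')$ parameter count — and keeping $L$ explicit, since the hidden constant does not depend on it — yields $\rho_{\mathcal{N}_1}=\Omega\big(\tfrac1L(N')^{LN\,\rank(A)-1}\big)$.

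For $\mathcal{N}_2$ I would instead use the exact one-layer formula of Theorem~\ref{thm:one-layer-eq} directly: its output feature dimension is $LN'$, which tends to infinity because $L$ is fixed, so $R_{\mathcal{N}_2}=\big(\sum_{i=0}^N\binom{LN'}{i}\big)^{D^*}=\Theta\big((LN'D)^{N\,\rank(A)}\big)$, the factor $D$ only rewriting the constant. Dividing by the $\Theta(LN')$ parameter count gives $\rho_{\mathcal{N}_2}=\Theta\big((LN'D)^{N\,\rank(A)-1}\big)$, which, $L$ and $D$ being fixed, is the stated polynomial order in $N'$. Comparing the two ratios then gives the qualitative conclusion behind the theorem: for $L\ge 2$, $\rho_{\mathcal{N}_1}$ exceeds $\rho_{\mathcal{N}_2}$ by a factor $\Omega\big((N')^{(L-1)N\,\rank(A)}\big)$, i.e. the deep network has exponentially more linear regions per parameter than the equally-parametrized shallow one.

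There is no substantive mathematical obstacle — everything follows from previously established bounds — so the only place genuine care is needed is the bookkeeping, and I expect two points to be the delicate part. First, one must keep the three names $\rank(A)$, $\rank(\hat A)$ and $D^*$ straight across Theorems~\ref{thm:one-layer-ineq},~\ref{thm:one-layer-eq},~\ref{thm:asym} and~\ref{thm:bounds}, since the exponents in the statement are written in terms of $\rank(A)$. Second, one must be careful about exactly which quantities are frozen when invoking the one-layer asymptotics, so that $\sum_{i=0}^N\binom{M}{i}=\Theta(M^N)$ is used with $M=N'$ for $\mathcal{N}'$ and $M=LN'$ for $\mathcal{N}_2$, and so that the $L-1$ factors $\lfloor N'/N\rfloor=\Theta(N')$ are not accidentally absorbed into a constant; the differing bias conventions in Lemma~\ref{lem:5.1} are harmless for the same reason, since they perturb the parameter count only by an $N'$-free constant.
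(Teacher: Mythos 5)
Your proposal is correct and follows essentially the same route as the paper, which gives no separate argument for Theorem~\ref{thm4.1} beyond combining the parameter count of Lemma~\ref{lem:5.1} with the lower bound of Theorem~\ref{thm:bounds}(ii) and the one-layer formula of Theorem~\ref{thm:one-layer-eq}, exactly as you do. One small remark: your (correct) computation gives $\rho_{\mathcal{N}_2}=\Theta\bigl((N')^{N\,\rank(A)-1}\bigr)$ once $L$ and $D$ are frozen, so the exponent $N\times\rank(A)$ in the theorem's final sentence appears to be a typo rather than something your argument needs to reproduce.
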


It is not difficult to check that $\rho_{\mathcal{N}_1}$ grows much faster when $L$ and $N^{\prime}$ are large enough. Theorem~\ref{thm4.1} then shows that with the same total number of parameters, a deep GCN has more complexity than a shallow GCN, and thus more expressive power.

\subsection{GCNs vs Fully Connected NNs}
Let $\mathcal{N}_1$ be the $L$-layer GCN in Theorem \ref{thm4.1} with invertible adjacency matrix and $\mathcal{N}_3$ be an $L$-layer fully connected NNs (or an $L$-layer MLP). Suppose that the input dimension of NNs is $D\times N_0$ and the output dimension of each layer is $N$. Trivially, the number of parameters for $\mathcal{N}_1$ and $\mathcal{N}_2$ are both $\mathcal{O}(LN^2)$ so we only need to compare the maximum number of linear regions. Again, we compare the upper bound of linear number of $\mathcal{N}_2$ with the lower bound of $\mathcal{N}_1$. Since $R_{\mathcal{N}_1}=\Theta\left((DN_0)^L\tbinom{N}{DN_0}^L\right)$, from Stirling's formula \cite{Flajolet09analyticcombinatorics} we could obtain $R_{\mathcal{N}_1}=\Theta\left((DN_0)^L\frac{N^{LDN_0}}{(DN_0/e)^{DN_0L}\sqrt{2\pi DN_0}^L}\right)$. Therefore, 
\begin{equation}\label{eq:R_N1 to R_N2}
    \frac{R_{\mathcal{N}_1}}{R_{\mathcal{N}_2}}=\Omega\left(\left(\frac{D^{DN_0}}{DN_0e^{DN_0}}\right)^L\right).
\end{equation}
Obviously, when D tends to infinity, the ratio $\frac{R_{\mathcal{N}_1}}{R_{\mathcal{N}_2}}$ approaches infinity. This shows that $R_{\mathcal{N}_1}$ grows much faster than $R_{\mathcal{N}_2}$. This observation means ReLU GCNs with invertible adjacency matrix has more expressivity than NNs. 

\begin{remark}
Let $\mathcal{N}_1$ be the $L$-layer GCN in Theorem \ref{thm4.1} with invertible adjacency matrix and $\mathcal{N}_3$ be an $L$-layer CNN. Suppose that the input dimension of the first layer is $D\times 1\times N$ and the output dimension of each layer is $1\times 1\times N^{\prime}$. Therefore, the upper bound of the number of linear regions for $\mathcal{N}_3$ is $R_{\mathcal{N}_3^*}\prod_{l=2}^{L}\sum_{i=0}^{Nd}\tbinom{N^{\prime}}{i}$ where $R_{\mathcal{N}_3^*}$ is the number of linear regions for a one-layer CNN with input dimension $D\times 1\times N$ and output dimension $1\times 1\times N^{\prime}$. From \cite{Xiong2020OnTN} we can prove  $R_{\mathcal{N}_3^*}=\mathcal{O}(R_{\mathcal{N}_1^*})$. Thus, we obtain the same order to \eqref{eq:R_N1 to R_N2}, $\frac{R_{\mathcal{N}_1}}{R_{\mathcal{N}_3}}=\Omega\left(\left(\frac{D^{DN_0}}{DN_0e^{DN_0}}\right)^L\right)$. This means that ReLU GCNs with invertible adjacency matrix has more expressive power than CNNs. Note here we provide a rough comparison as we used the upper bound of CNNs in \cite{Xiong2020OnTN}, which has the same order as the fully connected NNs with same depth and similar size of input and same order number of network parameters. The true maximum number of linear regions of CNNs is smaller than $\mathcal{O}(R_{\mathcal{N}_1^*})$.
\end{remark}

\section{Experiments}\label{append:examples}

\begin{table}[t]
\centering
\caption{The lower and upper bounds from \eqref{ineq:up-multi} and \eqref{ineq:low-multi}, and simulated estimate of Linear regions of GCN with two layers, for different feature dimensions $N_2$ of the second layer output. The input feature dimension for the first and second is both set 2.}
\label{tab:ran-multi}
\vspace{2mm}
\begin{tabular}{c|ccccc}
\toprule
$N_2$                   & 1    & 2   & 3     & 4  &5        \\ \midrule
Lower bound          & 8   & 64  & 343  & 1331 & 4096  \\ 
Simulated Estimate & 101    & 223 & 1643  &  2398&4453      \\ 
Upper bound          & 512 & 4096 & 29824 & 160640 & 636736\\ \bottomrule
\end{tabular}
\end{table}

\paragraph{Number of Linear Regions for 2-layer GCNs} 
We show an example of two-layer GCN to illustrate  lower and upper bounds of Theorem~\ref{thm:bounds} compared with the simulated estimate.
Let $\mathcal{N}$ be a two-layer GCN. 
The inputs of the first layer and the second layer are $X^{(0)}\in \mathbb{R}^{3\times 2}$ and  $X^{(1)}\in \mathbb{R}^{3\times N_1}$, respectively. The output dimension of the second layer is $3\times N_2$. 
The input graph has the normalized adjacency matrix
\begin{equation*}
 \hat{A}=\left (
 \begin{matrix}
   \frac{1}{2} & \frac{1}{\sqrt{6}} & 0 \\
    \frac{1}{\sqrt{6}} & \frac{1}{3} &  \frac{1}{\sqrt{6}} \\
    0 &  \frac{1}{\sqrt{6}} & \frac{1}{2}
  \end{matrix}
  \right).
\end{equation*}

Table~\ref{tab:ran-multi} shows the bounds of different GCNs with different values of $N_2$ together with the simulated estimate for $R_{\mathcal{N}}$ by random sampling methods. The results suggest that the lower bound is closer to the simulated estimate, and thus the true number of the linear regions. 

To validate our results in the previous sections, we have randomly sampled data points as the input, and counted the corresponding number of linear regions by Definition~\ref{def:ac-pa}. We initialize the parameters based on Kaiming He initialization \cite{he2015delving}. Given parameters, we have tried normal distribution and uniform distribution to sample our input points with different distribution parameters and report the biggest number of linear regions in Table \ref{tab:ran-multi}. Concretely, we test for the normal distribution with $\mu=0$ and $\sigma^2\in \{1,3,5,7,9\}$, and for uniform distribution $\mathcal{U}(-u,u)$ with $u\in \{1,5,10\}$. To obtain a statistically sound estimate, we sample $2\times 10^6$ input data and record the activation pattern for each input, and find the total disjoint patterns as the total linear regions for all inputs. This number is summarized in Table~\ref{tab:ran-multi} as the simulated estimate for the linear regions of a specific GCN.

\begin{figure}[t]
\centering   
 \begin{minipage}{\textwidth}
 \centering
\begin{minipage}{0.48\textwidth}
	\centering         
	\includegraphics[width=\textwidth]{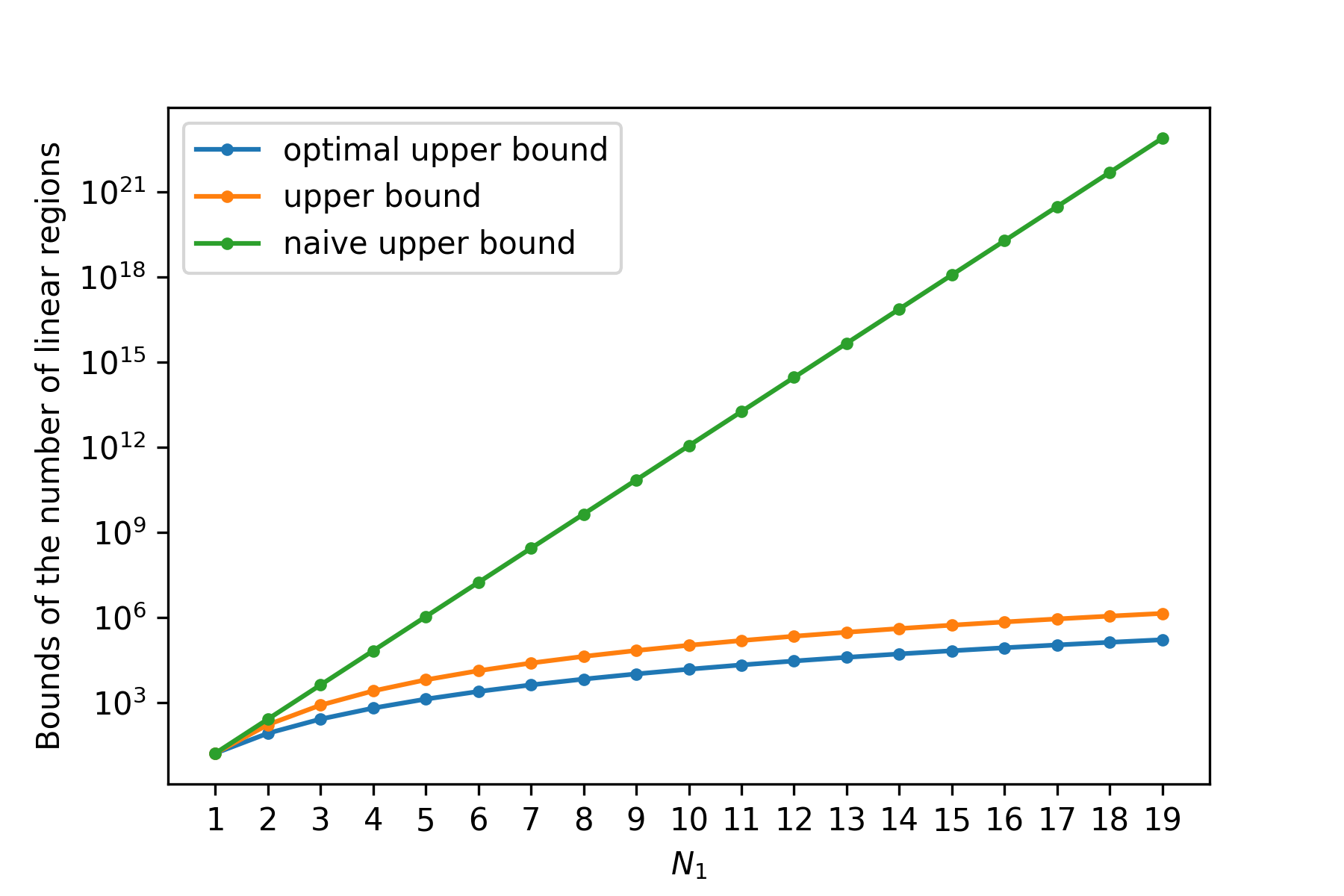}   
	\end{minipage}
\begin{minipage}{0.48\textwidth}
	\centering         
	\includegraphics[width=\textwidth]{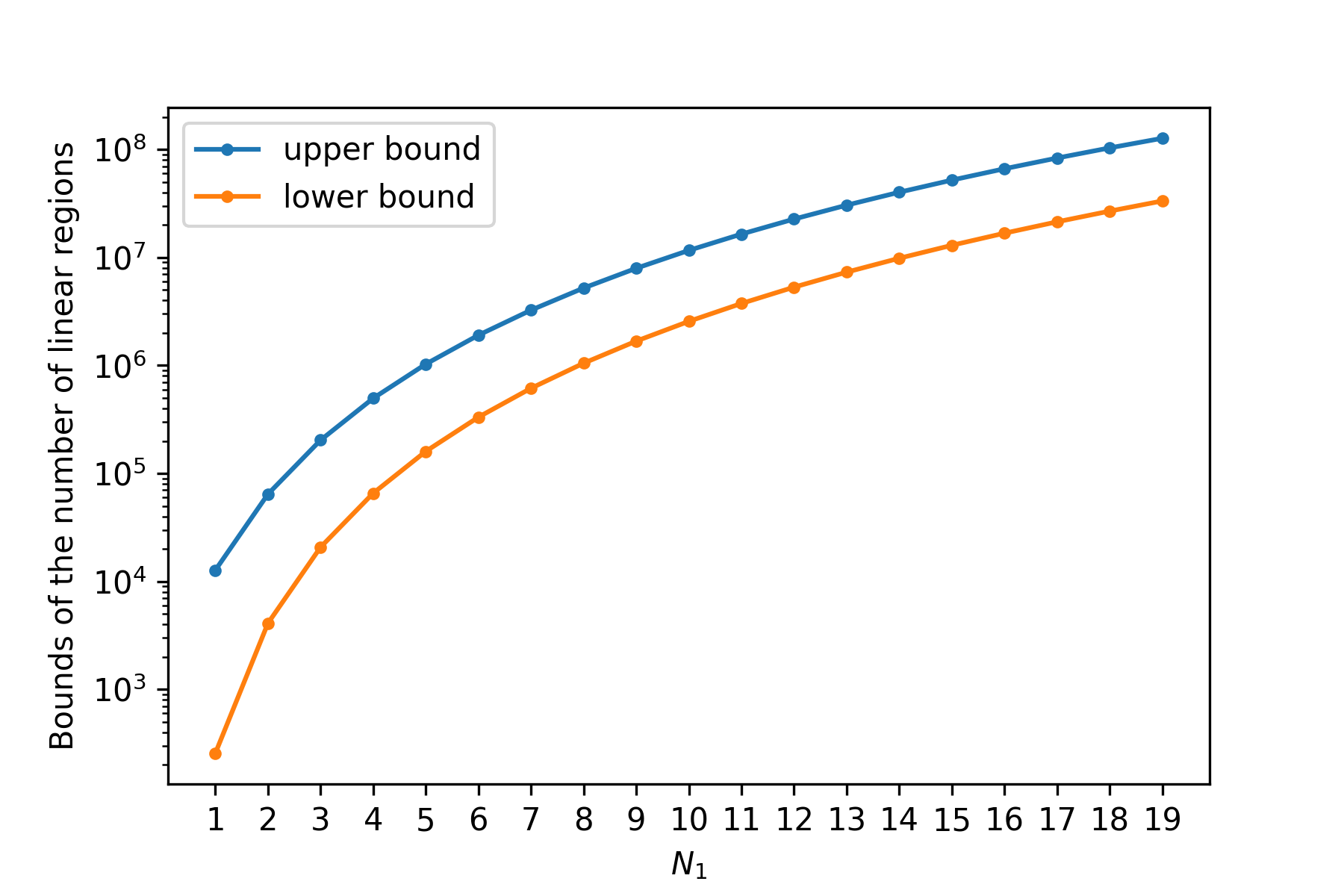}   
	\end{minipage}
\end{minipage}
\caption{Bounds for GCNs with 1 layer (left) and 2 layers (right) on an input graph which consists of 4 nodes and 3 edges. The left image shows the optimal upper bound (green) gained from Theorem~\ref{thm:one-layer-eq}, upper bound (orange) from Proposition~\ref{prop:nn-eq} and the naive upper bound (blue) from Lemma~\ref{lemma:trivial-bound} for a 1-layer GCN with 1 input feature and $N_1$ output features where $N_1$ ranges from 1 to 19. The right image shows the lower bound (orange) and upper bound (blue) obtained from Theorem~\ref{thm:bounds} for a 2-layer GCN with 1 input feature and $N_1$ and 3 output features for the first and second layers.}
\label{fig:linear-region4}
\end{figure}


\paragraph{Estimates for Linear Regions of GCNs with 1 and 2 layers} 
Figure~\ref{fig:linear-region} shows the bounds for the number of linear regions of one-layer and two-layer GCNs as given by Theorem~\ref{thm:one-layer-eq}, Proposition~\ref{prop:nn-eq}, Lemma~\ref{lemma:trivial-bound} and Theorem~\ref{thm:bounds}. The input graph has 4 nodes and adjacency matrix
$$
 \hat{A}=\left (
 \begin{matrix}
   \frac{1}{3} & \frac{1}{3} & \frac{1}{\sqrt{6}} & 0 \\
  \frac{1}{3} & \frac{1}{3} & 0& \frac{1}{\sqrt{6}}  \\
    \frac{1}{\sqrt{6}} &  0 & \frac{1}{2} &0\\
    0&\frac{1}{\sqrt{6}} &0&\frac{1}{2}\\
  \end{matrix}
  \right).
$$
In this specific case, we can obtain exact formulas for the estimated bounds, as follows.
For 1-layer GCN with 1 input feature and $N_1$ output features, from \eqref{eq:expected R_N}, $R_{\mathcal{N}}=(1+N_1)^4$ almost surely. The upper bounds in Proposition~\ref{prop:nn-eq} and Lemma~\ref{lemma:trivial-bound} read $R_{\mathcal{N}}\leq \sum_{i=0}^{4}\tbinom{4N_1}{i}$ and $R_{\mathcal{N}}\leq 2^{4N_1}$. For the 2-layer GCN with 1 input feature and $N_1$ output features for the first layer and 3 output features for the second layer, the upper and lower bounds are given by $625N_1^4\leq R_{\mathcal{N}}\leq (1+N_1)^4\sum_{i=0}^{4N_1}\tbinom{4N_1}{i}$. We use these bounds to plot Figure~\ref{fig:linear-region4}.
The results demonstrate that for one-layer GCNs, our estimated upper bound (orange) for the maximum number of linear regions has an improvement for the existing upper bound (blue).
For both the one-layer and two-layer GCNs, the network becomes more complex as the number of neurons of the first layer increases.


\section{Related Work}\label{sec:related work}
\paragraph{Linear Regions} The number of linear regions for one-layer NNs with $n_0$ inputs and $n_1$ neurons was first reported by \cite{pascanu2013number}. They turned the problem of counting linear regions into the problem of counting regions of hyperplane arrangements in general position and applied Zaslavsky's theorem \cite{zaslavsky1975facing,stanley2004introduction}. Moreover, they proposed a lower bound $\left(\prod_{l=1}^{L-1}\lfloor\frac{n_l}{n_0}\rfloor\right)\sum_{i=0}^{n_0}\tbinom{n_{L}}{n_0}$ for the maximal number of linear regions for a fully connected neural network with input dimension $n_0$ and $L$ hidden layers of width $n_1$, $n_2$,\dots, $n_L$. Based on these results, they concluded that deep fully-connected ReLU NNs have exponentially more maximal linear regions than the shallow counterparts with asymptotically the same number of parameters. 
The exact number of linear regions for one-layer CNNs with input dimension $n_0^{(1)}\times n_0^{(2)}\times d_0$ and output dimension $n_1^{(1)}\times n_1^{(2)}\times d_1$ was first proposed\cite{Xiong2020OnTN}. In their work, they translate the problem of counting the linear regions of CNNs to a problem on counting the regions of some class of hyperplane arrangements which usually are not in general position, and develop a generalized Zaslavsky's theorem to solve this problem. Furthermore, they proposed a lower bound $R_{\mathcal{N}'}\prod_{l=1}^{L-1}\lfloor\frac{d_l}{d_0}\rfloor^{n_l^{(1)}\times n_l^{(2)}\times d_0}$ ($R_{\mathcal{N}}$ is the number of linear regions for the one-layer CNN with input dimension $n_{L-1}^{(1)}\times n_{L-1}^{(2)}\times d_{L-1}$ and output dimension $n_L^{(1)}\times n_L^{(2)}\times d_L$. ) for the maximal number of linear regions for a fully connected neural network with input dimension $n_0^{(1)}\times n_0^{(2)}\times d_0$ and $L$ hidden layers size $n_1^{(1)}\times n_1^{(2)}\times d_1$, $n_2^{(1)}\times n_2^{(2)}\times d_2$,\dots, $n_L^{(1)}\times n_L^{(2)}\times d_L$. Based on these results, they concluded that deep fully-connected ReLU CNNs have exponentially more maximal linear regions than their shallow counterparts with asymptotically the same number of parameters. Besides, they also compared the expressive power of deep ReLU CNNs with deep fully-connected ReLU NNs on some assumptions of the network architecture and found that there are more linear regions in the input space of deep ReLU CNNs than the deep ReLU NNs. 

The exact number of linear regions for GNNs is reported in 2021 \cite{bodnar2021weisfeiler}. They focus on one-layer GNNs without bias and translate the problem of counting the linear regions of CNNs to a problem on counting the regions of some class of hyperplane arrangements in general position. They also obtained that the regions of an arrangement $\mathcal{A}$ of $M$ hyperplanes in $\mathbb{R}^{N}$ is equal to $2\sum_{j=0}^{N-1}\tbinom{M-1}{j}$. Compared to \cite{bodnar2021weisfeiler}, we consider a general case of bias for GCNs, and prove the lower and upper bounds of the maximal number of linear regions of multilayer GCNs.

\paragraph{WL Tests for GNN Expressiveness} \cite{geerts2022expressiveness} gave a thorough study of the expressiveness and approximation properties of GNNs in terms of the separation power and WL-test. For the expressivity of GNNs, WL-tests are designed. The Weisfeiler-Lehman algorithm \cite{weisfeiler1968reduction} was first proposed to solve the graph isomorphism problem. The properties of these WL algorithms are also investigated \cite{babai1980random,cai1992optimal}. Later, $k$-WL and $k$-FWL algorithms were proposed and proved to be more powerful or efficient. For instance, $\forall k\geq 2$, $k$-FWL is as powerful as $(k+1)$-WL and $(k+1)$-WL is strictly more powerful than $k$-WL \cite{grohe2015pebble,grohe2017descriptive}.  \cite{xu2018powerful,morris2019weisfeiler} have proved that if there exist two graphs $G$ and $H$ that are possibly isomorphic from the output of $1$-WL algorithm, the embeddings computed by GNNs must be the same. For any two graphs $G$ and $H$ that are non-isomorphic according to the $k$-WL algorithm, there would exist a $k$-GNN computing different embeddings $h_G$ and $h_H$ \cite{morris2019weisfeiler}. For higher order graph neural networks, $k$-WL algorithm could also be a standard criterion for the expressive power of GNNs and more neural networks on simplicial complexes \cite{maron2019provably,chen2020can,bodnar2021weisfeiler,bodnar2021cwn}. These works could describe the expressive power of a GNN by its ability of distinguishing different graphs.

\section{Discussion}
We focus on the number of linear regions of GCNs with non-zero biases. We give the exact formula for the maximal and average number of linear regions for one-layer GCNs and the upper and lower bounds for multi-layer GCNs. By these results, we compare the number of linear regions per parameter between deep GCNs and their shallow counterparts and among deep GCNs,  CNNs and  fully connected NNs. It turns out that deep GCNs have more expressivity than shallow GCNs.

We finally propose several potential directions related to the current work. 1) From table \ref{tab:ran-multi}, we can find that the stochastic sampling result is close to the lower bound but far from the upper bound. Thus, we seek for tighter bounds or exact formulas. 
2) We obtain the formula for the expectation of the number of linear regions for one-layer ReLU GCNs, which is the same as the maximal number. When the number of layers for a fully connected NN is greater than one, it has been proved in \cite{hanin2019deep} that the expectation of the number of linear regions is usually much smaller than the maximal number. It is interesting to derive similar formulas for the expectation of
the number of linear regions for multi-layer GCNs.  
3) We only consider the number of linear regions for GCNs which is one typical type of GNNs. However, there have been proposed many kinds of GNN models with various graph feature extraction modules including convolutions and message passing. To extend the results of the paper to other graph convolutions or neural message passing is one direction to pursue. 

\bibliographystyle{unsrt}


\newpage
\appendix
\section{Proof of the Number of Linear Regions for One-Layer GCNs}

\subsection{Proof of Theorem \ref{thm:one-layer-ineq}}

To prove Theorem \ref{thm:one-layer-ineq}, we need the following result  from \cite{Xiong2020OnTN}. 

\begin{lemma}[\cite{Xiong2020OnTN}] \label{lemma:xiong}
Let m, n be some positive integers, $V=\mathbb{R}^n$, $V_1$, $V_2$,\dots $V_m$ be m non-empty subspaces of $V$, and $n_1$, $n_2$,\dots $n_m$ be some nonnegative integers. Let $\mathcal{A}=\{H_{kj},\ 1\leq k\leq n,\ 1\leq j\leq n_i\}$ be an arrangement in $\mathbb{R}^n$ with $H_{kj}={X:\alpha X=b_{kj}}$ where $\boldsymbol{0}\neq \alpha_{kj}\in V_k$, $b_{kj}\in \mathbb{R}$. Then, the number of linear regions for the arrangement $\mathcal{A}$ satisfies
\begin{equation}\label{ineq:one-layer}
    r(\mathcal{A})\leq \sum_{(i_1,\ i_2,\ \dots,\ i_m)\in K_{V;V_1,\ V_2,\ \dots,\ V_m}}\prod_{k=1}^{m}\tbinom{n_k}{i_k},
\end{equation}
where 
$$K_{V;V_1, V_2, \dots, V_m}=\left\{(i_1,i_2,\dots,i_m):\ i_k\in \mathbb{N},\ \sum_{k\in J}i_k\leq {\rm dim}(\sum_{i\in J}V_k)\; \forall J\subseteq [m]\right\}.$$

Furthermore, assume that the two conditions hold for the arrangement $\mathcal{A}$:

(i) For each $(i_1,\ i_2,\ \dots,\ i_m)\in K_{V;V_1,\ V_2,\ \dots,\ V_m}$, any $\sum_{k=1}^mi_k$ vectors with $i_k$ distinct vectors chosen from the set $\{\alpha_{kj}:\ 1\leq j\leq n_k\}$ are linearly independent;

(ii) For each $(i_1,i_2,\dots,i_m)\in \mathbb{N}^m \backslash K_{V;V_1,V_2,\dots,V_m}$, the intersection of any $\sum_{k=1}^mi_k$ hyperplanes with $i_k$ distinct hyperplanes chosen from the set $\{H_{kj}:1\leq j\leq n_k\}$ are empty.

Then, the equality in \eqref{ineq:one-layer} holds:
\begin{equation}
    r(\mathcal{A})= \sum_{(i_1,i_2,\dots,i_m)\in K_{V;V_1,V_2,\dots,V_m}}\prod_{k=1}^{m}\tbinom{n_k}{i_k}    .
\end{equation}
\end{lemma}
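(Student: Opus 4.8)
The plan is to prove the inequality \eqref{ineq:one-layer} and the equality case separately, reducing the geometric region count to a purely combinatorial count over linearly independent subsets of the normal vectors $\{\alpha_{kj}\}$. Write $I(\mathcal{A})$ for the number of subsets $\mathcal{B}\subseteq\mathcal{A}$ whose normal vectors are linearly independent (including $\mathcal{B}=\varnothing$). I would establish the chain $r(\mathcal{A})\le I(\mathcal{A})\le \sum_{(i_1,\dots,i_m)\in K_{V;V_1,\dots,V_m}}\prod_{k=1}^m\tbinom{n_k}{i_k}$ for the upper bound, and then show that conditions (i) and (ii) force every inequality in this chain to become an equality.

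For the first inequality I would use the standard deletion--restriction identity for real arrangements: removing one hyperplane $H$ gives $r(\mathcal{A})=r(\mathcal{A}\setminus H)+r(\mathcal{A}^{H})$, where $\mathcal{A}^{H}$ is the arrangement induced on $H$. Since the matroid of $\mathcal{A}^{H}$ is the contraction of the matroid of $\mathcal{A}$ by $H$, the independent subsets split accordingly as $I(\mathcal{A})=I(\mathcal{A}\setminus H)+I(\mathcal{A}^{H})$, and induction on the number of hyperplanes yields $r(\mathcal{A})\le I(\mathcal{A})$. For the second inequality I would assign to each independent subset $\mathcal{B}$ its profile $(i_1,\dots,i_m)$, where $i_k$ counts the hyperplanes of $\mathcal{B}$ taken from group $k$. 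Since for any $J\subseteq[m]$ the $\sum_{k\in J}i_k$ chosen normals indexed by $J$ all lie in $\sum_{k\in J}V_k$ and are linearly independent, a dimension count gives $\sum_{k\in J}i_k\le\dim(\sum_{k\in J}V_k)$; hence every profile lies in $K_{V;V_1,\dots,V_m}$. Grouping independent subsets by profile and bounding the number with a fixed profile by $\prod_k\tbinom{n_k}{i_k}$ gives $I(\mathcal{A})\le\sum_{(i_1,\dots,i_m)\in K}\prod_k\tbinom{n_k}{i_k}$, which completes the upper bound.

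For the equality I would pass to Zaslavsky's and Whitney's theorems, which give $r(\mathcal{A})=\sum_{\mathcal{B}\text{ central}}(-1)^{|\mathcal{B}|+\mathrm{rank}(\mathcal{B})}$, where the sum runs over subsets with $\bigcap\mathcal{B}\ne\varnothing$ and $\mathrm{rank}(\mathcal{B})$ is the rank of the corresponding normals. The crux is to show that under (i) and (ii) the central subsets are precisely those with profile in $K$ and that each is linearly independent: condition (i) guarantees that every selection with profile in $K$ has independent normals, hence a nonempty (central) intersection, while condition (ii) guarantees that every selection with profile outside $K$ — which is necessarily dependent, since some $J$ violates the dimension bound — has empty intersection and is therefore not central. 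Once all central subsets are independent, each summand equals $(-1)^{2|\mathcal{B}|}=+1$, so $r(\mathcal{A})$ counts exactly the central subsets, namely $\sum_{(i_1,\dots,i_m)\in K}\prod_k\tbinom{n_k}{i_k}$, matching the upper bound. The main obstacle I anticipate is precisely this bookkeeping: verifying rigorously that a profile lies outside $K$ if and only if the chosen normals are forced to be dependent, and that (i) and (ii) then leave no dependent central subset to contribute a cancelling sign. This is exactly the step where the subspace hypotheses must be used in full, in place of the classical generic-position assumption of Zaslavsky's theorem.
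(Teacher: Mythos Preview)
The paper does not prove this lemma at all: it is quoted as a known result from \cite{Xiong2020OnTN} (the generalised Zaslavsky bound for arrangements whose normals are grouped by subspaces) and is used as a black box in the proof of Theorem~\ref{thm:one-layer-ineq}. There is therefore no ``paper's own proof'' to compare your proposal against.

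That said, your outline is a correct and standard route to the result. The chain $r(\mathcal{A})\le I(\mathcal{A})\le \sum_{(i_1,\dots,i_m)\in K}\prod_k\binom{n_k}{i_k}$ is exactly right: the first inequality follows either from your deletion--restriction induction or directly from the Zaslavsky--Whitney expansion by the triangle inequality, and the second from the dimension count showing every independent subset has profile in $K$. For the equality, your use of $r(\mathcal{A})=\sum_{\mathcal{B}\ \text{central}}(-1)^{|\mathcal{B}|-\operatorname{rank}(\mathcal{B})}$ together with the implications
\[
\text{central}\ \stackrel{\text{(ii)}}{\Longrightarrow}\ \text{profile in }K\ \stackrel{\text{(i)}}{\Longrightarrow}\ \text{independent}\ \Longrightarrow\ \text{central}
\]
cleanly shows that central, independent, and profile-in-$K$ coincide, so every summand is $+1$ and the count is $\sum_{K}\prod_k\binom{n_k}{i_k}$. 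The only place to be slightly careful is the identity $I(\mathcal{A})=I(\mathcal{A}\setminus H)+I(\mathcal{A}^{H})$: since distinct hyperplanes of $\mathcal{A}$ can restrict to the same hyperplane on $H$, this should be read at the level of the contraction matroid (multiset) rather than the set-theoretic restricted arrangement; you already note this, and with that reading the induction is sound.
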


Now, we could give proofs of Theorem \ref{thm:one-layer-ineq} and Proposition \ref{col:max-one-layer}.
\begin{proof}[Proof of Theorem \ref{thm:one-layer-ineq}]
Assume that $\mathcal{N}$ is a one-layer GCN with input dimension $D\times N$ like this:
\begin{equation}
    X^{(1)} = h(\hat{A}X^{(0)}W),
\end{equation}
where $\hat{A} \in \mathbb{R}^{D\times D}$, $W \in \mathbb{R}^{N\times N^{\prime}}$ and $h$ is the activation function. Let $V_{1,1},V_{1,2},...,V_{D^*N^{\prime}}$ be subspaces of $V=\mathbb{R}^{D^*\times N}$. Suppose $\mathcal{A}=\{H_{ijk},1\leq i \leq D^*, 1\leq j \leq N^{\prime}, 1\leq k\leq q\}$ is an arrangement in $V$ with $H_{ijk}=\{X:\left<\alpha_{ij},X\right>_F+b_{ij}=e_k\}$, where $\alpha_{ij}=a_i^{\top}w_j^{\top} \in V_{ij}$ and $a_i$ is the $i$-th row of $\hat{A}$ and $w_j$ is $j$-th column of $W,$ and $\left<,\right>_F$ is the Frobenius inner product. Then, ${\rm dim}(V_{ij})=1$ and ${\rm dim}(\Sigma_{i\in I,\ j\in J}V_{ij})={\rm rank}((w_j^{\top}),j\in J)\times {\rm rank}((a_i^{\top}),i\in I)$, $\forall J\subseteq [D^*],\ I\subseteq [N^{\prime}]$. We thus obtain
\begin{equation}
    K_{\mathcal{N},\theta}:=\{(k_{1,1},k_{1,2},...,k_{D^*N^{\prime}}):k_{ij}\in \mathbb{N},\sum_{i\in I,\ j\in J} k_{ij} \leq {\rm dim}(\sum_{k_{ij}=1}V_{ij}) , \forall (i,j)\in [D^*]\times [N^{\prime}].
\end{equation}

From the definition of arrangement $\mathcal{A}$, the number of linear regions for $\mathcal{A}$ is equal to the number of linear regions for the input space of $\mathcal{N}$.
The maximal number of linear region of $\mathcal{N}$ is then
\begin{equation}
    R_{\mathcal{N},\theta}\leq\sum\limits_{(k_{1,1},...,k_{DN^{\prime}})\in K_{\mathcal{N}}}\prod\limits_{(i,j)\in [D^*]\times [N^{\prime}]}^{D}\binom{1}{k_{ij}}=\#K_{\mathcal{N}}.
\end{equation}
\end{proof}

\begin{proof}[Proof of Proposition \ref{col:max-one-layer}]
By Lemma \ref{lemma:adj-matrix-rank}, all row vectors in $\Tilde{A}$ are linearly independent. Then, if the distinct vectors $\alpha_{ij}$ are linearly independent, for any chosen index $i$ in $\alpha_{ij}$ (assuming that $J^*_i$ is the index set for all $j$), the vectors $w_j$ in $\alpha_{ij}$ are linearly independent. On the other hand, for any chosen $i$, if the vectors $w_j$ (for all $j\in J^*_i$) are linearly independent, the vectors $\alpha_{ij}$ for all $i$ and $j$ are linearly independent. Otherwise, we could assume that $\exists\ k_{ij}$ that are not all zero could make $\sum_{i=1}\sum_{j\in J^*_i}k_{ij}a^{\top}_iw^{\top}_j=0$. Therefore, $\forall\ i,\ \sum_{j\in J^*_i}k_{ij}w^{\top}_j=0$ and this is contrary to the fact that for all the vectors $w_j$ (for all $j\in J^*_i$) are linearly independent.    

We then obtain if and only if there are a group of $\alpha_{ij}$ linearly independent, the corresponding group of $W_j$ (for all $j$ in $J^*$) are linearly independent. Besides, since $k_{ij}\in \{0,1\}$, the number of the elements of $K_{\mathcal{N},\theta}$ is equivalent to the number of ways of choosing a group of independent vectors from all $\alpha_{ij}$.

(\romannumeral1) 
Since the rank of $W$ is $N^{\prime}$, the column vectors in $W$ are linearly independent. If $\mathrm{rank}(W)\textless N^{\prime}$, not all $\alpha_{ij}$ are linearly independent, which indicates that there will be less elements in $K_{\mathcal{N},\theta}$. We thus obtain that $\#K_{\mathcal{N},\theta}= 2^{D^*N^{\prime}}$.

(\romannumeral2) 
If any $N$ column vectors in $W$ are linearly independent, we could choose at most $N$ column vectors from $W$ so that all the corresponding $\alpha_{ij}$ are linearly independent. If not, there will be less elements in $K_{\mathcal{N},\theta}$. This implies that $$\#K_{\mathcal{N},\theta}= \left(\sum_{i=0}^{N}\tbinom{N^{\prime}}{i}\right)^{D^*}.$$ 
\end{proof}

\subsection{Proof of Theorem \ref{thm:one-layer-eq}}
To prove Theorem~\ref{thm:one-layer-eq}, we need to show that when the rows $\hat{A}$ are linearly independent to each other and $\rank(W)=\min(N,N^{\prime})$, the condition 1 and condition 2 are satisfied, which means the equality in theorem can be achieved. Next, we need to prove that when we take away all the repeated rows of $\hat{A}$, the left rows are linearly independent. Moreover, it is not difficult to check that if $W$ and $B$ are drawn from a distribution which has densities with respect to Lebesgue measure in $\mathbb{R}^{\#weights+\#biases},$ the set of $W$ with $\rank(W)=\min(N,N^{\prime})$ is a complement of a measure-zero set with respect to the Lebesgue measure on $\mathbb{R}^{\#{\rm weights}}$. To achieve these, we need Lemmas~\ref{lemma:adj-matrix-rank} to \ref{lemma1}, which we prove now.
\begin{proof}[Proof of Lemma~\ref{lemma:adj-matrix-rank}]
If we can find one row vector of $\Tilde{A}$ denoted as $\alpha_k$ such that
$$\alpha_k=\sum\limits_{i\in S}a_i\alpha_i,$$ 
where $\alpha_i$ is the $i$-th row of $\Tilde{A}$ and $a_i> 0$. Since the element of $\Tilde{A}$ is either 1 or 0 and $\Tilde{A}_{ii}=1$, we obtain that $\Tilde{A}_{ki}=1$. Moreover, as $\Tilde{A}$ is symmetric, $\Tilde{A}_{ik}=1$. Thus $\sum\limits_{i\in S}a_i=1$. Now we claim that for any $j$ that $\Tilde{A}_{kj}=1$, $\Tilde{A}_{ij}=1\ \forall i\in S$ and that for any $j$ that $\Tilde{A}_{kj}=0$, $\Tilde{A}_{ij}=0$. Therefore, $\alpha_k=\alpha_i\ \forall\ i\in S$, which is contrary to the hypothesis. Then, $\Tilde{A}$ is invertible. We can then check that if we eliminate the duplicate rows of $\Tilde{A}$, the left row vectors are linearly independent.
\end{proof}

\begin{proof}[Proof of Lemma~\ref{lemma:W-rank2}]
Without loss of generality, let $N\geq N^{\prime}$. Suppose that $W_0$ be the first $N^{\prime}$ rows of $W$. 
Consider the set $\mathcal{W}=\{W\in \mathbb{R}^{N\times N^{\prime}}:\  W_0\ \text{is}\ \text{not}\ \text{invertible}\}$. From Lemma~6 in \cite{Xiong2020OnTN}, $\mathcal{W}$ is a Lebesgue measure-zero set in $\mathbb{R}^{N\times N^{\prime}}$. Then, for any given $N$ and $N^{\prime}$, all $W$ such that the matrix consisting of any $N^{\prime}$ rows of $W$ is not invertible form a measure zero set.  
On the other hand, if the matrix formed by any $N^{\prime}$ rows of $W$ is not invertible, $\mathrm{rank}(W)\textless \min(N,N^{\prime})$. Then, all $W$ such that $\mathrm{rank}(W)\textless \min(N,N^{\prime})$ form a null set in $\mathbb{R}^{N\times N^{\prime}}$.
\end{proof}



\begin{proof}[Proof of Lemma~\ref{lemma:W-rank}]
Since there are $N$ column vectors in $W$ that are linearly dependent, there exists a $w_{i_0}$ in these $N$ vectors that could be linearly expressed as the linear sum of the other $N-1$ vectors. Therefore, from Lemma 6 in \cite{Xiong2020OnTN}, all the matrices consisting of these $N$ vectors form a measure zero set in $\mathbb{R}^{N\times N}$. This implies that all $W$ form a null zero set in $\mathbb{R}^{N\times N}$.
\end{proof}


\begin{proof}[Proof of Lemma~\ref{lemma1}]
Denote the set of $(W,B)$ as $E$.
Let the $(X_1,\mathcal{B}_1,\mu_1)$,  $(X_1,\mathcal{B}_1,\mu_1)$ be Lebesgue measure space. Let $X_1=\mathbb{R}^{N\times N^{\prime}}$ and $X_2=\mathbb{R}^{D\times N^{\prime}}$. Then, $W\in X_1$, $B\in X_2$ and $E\in X_1\times X_2$. Let $\mu_1$ and $\mu_2$ be two Lebesgue measures, $\mu=\mu_1\times \mu_2$ and $E^W=\{B:(W,B)\in E\}$.
Then, $\mu(E^W)=0$. We thus obtain $\mu(E)=\int \mu_1(E^W) d\mu_2=0$.
\end{proof}

\begin{proof}[Proof of Theorem \ref{thm:one-layer-eq}]
With Theorem \ref{thm:one-layer-ineq}, we only need to consider the two conditions in Lemma \ref{lemma:xiong}. 

Condition (\romannumeral1): If we choose $\sum_{i,j}a_{ij}$ vectors with $a_{ij}$ vectors from $V_{ij}$, and $\sum_{i,j} a_{ij} \leq {\rm dim}(\sum_{i,j} V_{ij})$, these vectors are linearly independent.

Condition (\romannumeral2): Let $A^*=\{\alpha_{ij}:\; H_{ij}\; \hbox{~is~chosen} \}$, $B^*=\{b_j: H_{ij}\; \hbox{~is~chosen}\}$, $I_H=\{(i,j):\; H_{ij}\; \hbox{~is~chosen}\}$. Then, the intersection of the chosen hyperplanes is empty is equivalent to say that there is no solution to the linear functions $\left\langle\alpha_{ij},X\right\rangle_F=b_j$. The linear functions could be turned into the matrix form $E \cdot {\rm vec}(X)=B$ with $E$ consisting of all the vectorized $\alpha_{ij}$. Due to the way of choosing $H_{ij}$, the rank of $E$ is less than its row number. Therefore, there exists a vector $\rho_E=(r_{1,1},\dots,r_{ij},\dots)$ such that $\sum_{(i,j)\in I_H}r_{ij}\alpha_{ij}=0$. 
For any $i$ that $(i,j)\in I_H,$ we then have 
$$\sum_{j}r_{ij} a_i^{\top} w_j^{\top}=a_i^{\top}(\sum_{j}r_{ij} w_j^{\top})=0.$$ 
Since the row vectors in $\Tilde{A}$ are linearly independent, $\sum_{j}r_{ij} b_j=0$. This shows that the set of all $(W,B)$ that satisfies condition (\romannumeral2) is a complement of Lebesgue null set in $\mathbb{R}^{\#{\rm weights}+\#{\rm biases}}$.
When the parameters $\theta$ are in a complement of Lebesgue measure-zero set in $\mathbb{R}^{\#{\rm weights}+\#{\rm biases}}$, $R_{\mathcal{N},\theta}=K_{\mathcal{N},\theta}$. Moreover, by Lemma~\ref{lemma:W-rank} and Lemma~\ref{lemma:W-rank2}, all the $(W,B)$ that does not satisfy the two conditions in Proposition~\ref{col:max-one-layer} also form a measure-zero set in $\mathbb{R}^{(N+1)\times N^{\prime}}$. This indicates $$E_{\theta\sim \mu}(R_{\mathcal{N},\theta})=\left(\sum_{i=0}^{N}\tbinom{N^{\prime}}{i}\right)^{D^*}.$$
\end{proof}

\subsection{Proof of Theorem \ref{thm:asym}}

\begin{proof}[Proof of Theorem \ref{thm:asym}]
By Theorem \ref{thm:one-layer-eq},  $$R_{\mathcal{N},\theta}=\left(\sum_{i=0}^{N}\tbinom{N^{\prime}}{i}\right)^{D^*}.$$ Since $N^{\prime}$ tends to infinity and $N=\Theta(1)$, $\tbinom{N^{\prime}}{N}=\Theta({N^{\prime}}^N)$ and $\tbinom{N^{\prime}}{i}=\Theta({N^{\prime}}^i)$, which gives 
$$R_{\mathcal{N},\theta}=\Theta({N^{\prime}}^{DN}).$$
\end{proof}

\section{Proof of Linear Regions for Multi-Layer GCNs}\label{sec:3.1}
Let $\mathcal{N}$ be a GCN with $L$ hidden convolutional layers. Suppose that there are $N_0$ input features and $N_l$ output features in the $l$-th layer.
First of all, we give the proof of the upper bound.
\begin{proof}[Proof of Theorem~\ref{thm:bounds}(\romannumeral1)]
 First, when $L=1$, the inequality holds in  \eqref{ineq:low-multi}. Now suppose that $L\geq 2$ and that the inequality is true for $L-1$. Let $\mathcal{N}^*$ be the GCN consist the first $L-1$ layers of $\mathcal{N}$. Then, it can calculated that
 $$
 R_{\mathcal{N}^*}\leq R_{\mathcal{N}^{\prime\prime}}\prod\limits_{l=2}^{L-1}\left(\sum\limits_{i=0}^{DN}\tbinom{DN_l}{i}\right).$$ 
Suppose that the first $L-1$ layers divide the input space into $m$ distinct linear regions $R_i$ ($1\leq i\leq m$). Let $\mathcal{F}_{\mathcal{N}}$ be the function represented by $\mathcal{N}$ and $\mathcal{F}_{\mathcal{N}^*}$ be the function represented by $\mathcal{N}^*$. When restricted into $\R_i$, $\mathcal{F}_{\mathcal{N}^*}$ is a linear function and $\mathcal{F}_{\mathcal{N}}$ is a composition of $\mathcal{F}_{\mathcal{N}^*}$ and the function represented by the last layer which is a composition of linear function and activation function. Therefore, $\mathcal{N}$ could be represented as a one-layer fully connected neural network with input dimension $D\times N_0$ and the output dimension $D\times N_L$. We thus obtain 
$$R_{\mathcal{N}}\leq R_{\mathcal{N}^*}\left(\sum\limits_{i=0}^{DN}\tbinom{DN_L}{i}\right)\leq R_{\mathcal{N}^{\prime\prime}}\prod\limits_{l=2}^{L}\left(\sum\limits_{i=0}^{DN}\tbinom{DN_l}{i}\right).$$
\end{proof}
Before starting the proof of the lower bound, we need several lemmas.
\begin{lemma}\label{lemma:eigen}
An eigenvalue of $\hat{A}$ is 1 and one of the corresponding eigenvector is $M^{\frac{1}{2}}J$, where $J$ is a vector with all elements equal to 1. This implies that all elements in the eigenvector $M^{\frac{1}{2}}J$ are nonnegative.
\end{lemma}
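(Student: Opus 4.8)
The plan is a short direct verification. First I would unpack the definition $\hat{A}=M^{-1/2}(I+A)M^{-1/2}$, recalling that $M$ is the diagonal degree matrix of $I+A$, so that $M_{ii}$ is exactly the $i$-th row sum of $I+A$. Since $I+A$ has a $1$ in every diagonal position, each such row sum is at least $1$; hence $M$ is invertible, $M^{1/2}$ is a well-defined diagonal matrix with strictly positive diagonal entries, and $\hat{A}$ itself is well-defined. This is the only point where a small sanity check is needed.

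Next I would compute $\hat{A}(M^{1/2}J)$ by straightforward cancellation:
\begin{equation}
\hat{A}\,M^{1/2}J = M^{-1/2}(I+A)M^{-1/2}M^{1/2}J = M^{-1/2}(I+A)J.
\end{equation}
The key observation is that $(I+A)J$ is precisely the vector whose $i$-th entry is the $i$-th row sum of $I+A$, i.e. $(I+A)J = MJ$ because $M$ is the diagonal matrix built from those row sums. Substituting gives $\hat{A}\,M^{1/2}J = M^{-1/2}MJ = M^{1/2}J$, so $M^{1/2}J$ is an eigenvector of $\hat{A}$ with eigenvalue $1$.

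Finally, the nonnegativity statement is immediate: every diagonal entry of $M^{1/2}$ equals $\sqrt{M_{ii}}>0$ and $J$ has all entries equal to $1$, so $M^{1/2}J$ has $i$-th entry $\sqrt{M_{ii}}\ge 0$. There is no genuine obstacle in this lemma; it is essentially the classical fact that the symmetrically normalized augmented adjacency matrix has $1$ as an eigenvalue with Perron eigenvector $M^{1/2}J$, and the only thing to be careful about is keeping track of which matrix the degree matrix $M$ is associated with (namely $I+A$, not $A$), which is what makes $M$ invertible and the eigenvector strictly positive.
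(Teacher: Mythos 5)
Your proposal is correct and rests on exactly the same observation as the paper's proof, namely that $(I+A)J=MJ$ because $M$ collects the row sums of $I+A$; the paper phrases this via the substitution $y$ relating $\hat{A}$ to the row-stochastic matrix $M^{-1}(A+I)$, while you verify $\hat{A}M^{1/2}J=M^{1/2}J$ directly, which is arguably cleaner (and avoids the paper's slightly garbled substitution step). Your added remarks on the invertibility of $M$ and the strict positivity of the entries are a harmless bonus.
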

\begin{proof}
Denote $\lambda$ as an eigenvalue and $x$ as the corresponding eigenvector. We then have $\hat{A}x=\lambda x$.
Let $y=M^{\frac{1}{2}}x$. Then, $M^{-1}(A+I)y=\lambda y$, and we could let $\lambda=1$ and $y=J$. Thus, $x=M^{\frac{1}{2}}J$.
\end{proof}

\begin{proof}[Proof of Theorem~\ref{thm:bounds}(\romannumeral2)]
Now we are ready to prove the lower bound.
%
Consider the ReLU GCN model
\begin{equation}
    X^{(l+1)}=h(\hat{A}X^{(l)}W),
\end{equation}
where $\hat{A}=M^{-\frac{1}{2}}(A+I)M^{-\frac{1}{2}}$ and $M={\rm diag}\bigl(\sum\limits_{j=1}^{D}(A_{ij}+1),\: i=1,2,..., D\bigr)$.
We now prove the lower bound by constructing a specific multi-layer GCN that meets the lower bound. As $N_1\geq N$, we can divide $N_l$ into $N$ parts: $n_l^{(k)}$, $k=1,2,..., N$, $\sum\limits_{k=1}^{N}n_l^{(k)}=N_l$ and $n_l^{(k)}\geq 1$. For simplicity, we let $n_l^{(k)}=\left\lfloor \frac{N_l}{N} \right\rfloor$ denoted as $p$. We then let
\begin{equation}
    W_{a,b}^{l}=\left\{
    \begin{array}{ccll}
        p, &  &  a=k,\; b=p(k-1)+1,& k=1,2,..., N,\\[1mm]
        2p, &   &  a=k,\; p(k-1)+2 \leq b \leq pk,& k=1,2,..., N,\\[1mm]
        0, &  &  \hbox{otherwise},& \\
    \end{array}\right.
\end{equation}
and
\begin{equation}
    B_{a,b}^{l}=
        -2(m-1)\sqrt{r_a},\quad  b=p(k-1)+m,\quad  k=1,2,..., N,\quad m=1,2,...,p,
\end{equation}
where $r_a$ is the ($a,a$)-th entry of the matrix $R$.

Denote $Y^{(l)}=\hat{A}X^{(l)}=(Y_1,..., Y_N)$. Then,
\begin{equation}
   Z^l_{a,b}= \left\{
    \begin{array}{cclll}
        pY^{(l)}_{a,b}, &  &  a=k,& b=p(k-1)+1& k=1,2,..., N,\\[1mm]
        2(pY^{(l)}_{a,b}-(m-1)\sqrt{r_a}), &   &   b=p(k-1)+m,&  k=1,2,..., N,& m=1,2,..., p,\\[1mm]
        0, & &  otherwise.&& \\
    \end{array}\right.
\end{equation}
The map $X^{(l+1)}=\max\{0,Z^l_{a,b}\}$ defines a function $X^{(l+1)}=\Phi_{l+1}(X^{(l)})$.

Next, we define an intermediate linear layer without activation function from $X^{(l)}$ to $X^{(l+1)}$ to $U^{(l+1)}$. We define weights by
\begin{equation}
    W_{a,b}^{*l}=\left\{
    \begin{array}{cclll}
        (-1)^{m+1},&  & a=p(k-1)+m& k=1,2,..., N& m=1,2,..., p,\\[1mm]
        0, &  &  \hbox{otherwise} & &\\
    \end{array}\right.
\end{equation}
and $B^{*l}=0$. We then have the equation
\begin{equation}
    U^{(l+1)}_{a,b} = \sum\limits_{j=1}^{p}(-1)^{j+1}X^{(l+1)}_{a,(b-1)p+j},
\end{equation}
which defines an affine function
\begin{equation}
  Y_{l+1}=\Phi^*_1(X_{l+1}).
\end{equation}

For each $i,j\in \mathbb{N^+}$, let
\begin{equation}
    \psi_i^c(x)=\left\{
    \begin{array}{ccl}
        \max\{0,x\},&   i=1,\\[1mm]
        \max\{0,2x-(2i-2c)\} & i\geq 2,\\
    \end{array}\right.
\end{equation}
where $c$ is a constant, and we let
\begin{equation}
    \phi_j^c(x)=\sum\limits_{i=1}^{j}\psi_i^c(jx).
\end{equation}
It can be checked that 
\begin{equation}
    \phi_j^c(x)=\left\{
    \begin{array}{ccl}
        0 &   i=1,\\[1mm]
        jx-i &  \frac{i}{j}c \leq x \leq \frac{i+1}{j}c,\\[1mm]
        i-jx & \frac{i-1}{j}c \leq x \leq \frac{i}{j}c.\\
    \end{array}\right.
\end{equation}
Then, $\phi_j$ is a linear function when $x$ is restricted to the interval $\left[0,\frac{1}{j}c\right],$ $\left[\frac{1}{j}c,\frac{2}{j}c\right],\dots$ $,\left[\frac{j-1}{j}c,c\right]$ and $$\phi_j\left(\left[\frac{i}{j}c,\frac{i+1}{j}c\right]\right)=\left[0,c\right].$$
It can verified that $U^{(l+1)}_{a,b}=\phi_p^{\sqrt{r_a}}(Y^{(l)}_{a,b})$ and this determines the mapping 
$$\Psi_{l+1}=\Phi^*_{l+1}\circ\Phi_{l+1}\circ\mathscr{A},$$ 
where $\mathscr{A}$: $X\rightarrow AX$. 
Since $\mathscr{A}$ sends $\prod\limits_{i=1}^{D}\left[0,\sqrt{r_i}\right]$ into $\prod\limits_{i=1}^{D}\left[0,\sqrt{r_i}\right]$
, the $\Psi_{l+1}$ sends $p^{N\cdot \rank(A)}$ distinct linear regions in the original space
$$
\prod\limits_{i=1}^{D}\left\{\left[0,\frac{1}{p}\sqrt{r_i}\right],\left[\frac{1}{p}\sqrt{r_i},\frac{2}{p}\sqrt{r_i}\right],..., \left[\frac{p-1}{p}\sqrt{r_i},\sqrt{r_i}\right]\right\}^N
$$
in $\prod\limits_{i=1}^{D}\left[0,\sqrt{r_i}\right]^N$ onto the same hypercubes. We can use similar mappings $\Psi_{1}$, $\Psi_{2},\dots$, and $\Psi_{L}$ to send the $p^{N\cdot \rank(A)}$ distinct linear regions in the input space onto the same hypercubes $\prod\limits_{i=1}^{D}\left[0,\sqrt{r_i}\right]^N$. The left part of the network is a one-layer ReLU GCN with $D$ nodes and $N$ input features. 

Also, a one-layer ReLU GCN with $D$ nodes and $N$ features for each node and output dimension $D\times N_L$ can partition the hypercube $\prod\limits_{i=1}^{D}\left[0,\sqrt{r_i}\right]^N$ into $R_{\mathcal{N^{\prime}}}$ regions. Putting the networks from $X^{(0)}$ to $U^{(L-1)}$ and $U^{(L-1)}$ to $X^{(L)}$ together, we obtain the lower bound in \eqref{ineq:low-multi}.
\end{proof}

\section{Experiments}
\paragraph{Details on Figure \ref{fig:linear-regions}} We consider the two-layer and three layer GCN with one input feature and three nodes in the graph. We choose a 2D slice of the input space and show the linear regions in it. The 2D slice is determined by three random points. We choose 90000 points uniformly in $[-10,10]^2$ from the slice and evaluate the gradient of each point. Then we plot the gradient of each point using colors determined by the gradient. Therefore, the region with the same color is the region with the same gradient and could be considered as one linear region in the slice. 

\paragraph{Estimate for Linear Regions of GCNs with 1 and 2 layers} Suppose that the adjacency matrix is 
\begin{equation}\label{eq:adjmat}
    \Tilde{A}=\left (
 \begin{matrix}
   \frac{1}{3} & \frac{1}{\sqrt{6}} & \frac{1}{\sqrt{6}} \\
    \frac{1}{\sqrt{6}} & \frac{1}{2} &  0 \\
    \frac{1}{\sqrt{6}} &  0 & \frac{1}{2}
  \end{matrix}
  \right).
\end{equation}
 
For 1-layer GCN with 1 input feature and $N_1$ output features, from \eqref{eq:expected R_N}, $R_{\mathcal{N}}=\left(\frac{N_1^2}{2}+\frac{N_1}{2}+1\right)^3$ almost surely. The upper bounds in Proposition~\ref{prop:nn-eq} and Lemma~\ref{lemma:trivial-bound} are $R_{\mathcal{N}}\leq \sum_{i=0}^{6}\tbinom{3N_1}{i}$ and $R_{\mathcal{N}}\leq 2^{3N_1}$. For the 2-layer GCN with 1 input feature and $N_1$ output features for the first layer and 3 output features for the second layer, the upper and lower bounds are given by $343\left\lfloor \frac{N_1}{2} \right\rfloor ^3\leq R_{\mathcal{N}}\leq \left(\frac{N_1^2}{2}+\frac{N_1}{2}+1\right)^3\sum_{i=0}^{3N_1}\tbinom{3N_1}{i}$.

\begin{figure}[t]
\centering   
 \begin{minipage}{\textwidth}
 \centering
\begin{minipage}{0.48\textwidth}
	\centering         
	\includegraphics[width=\textwidth]{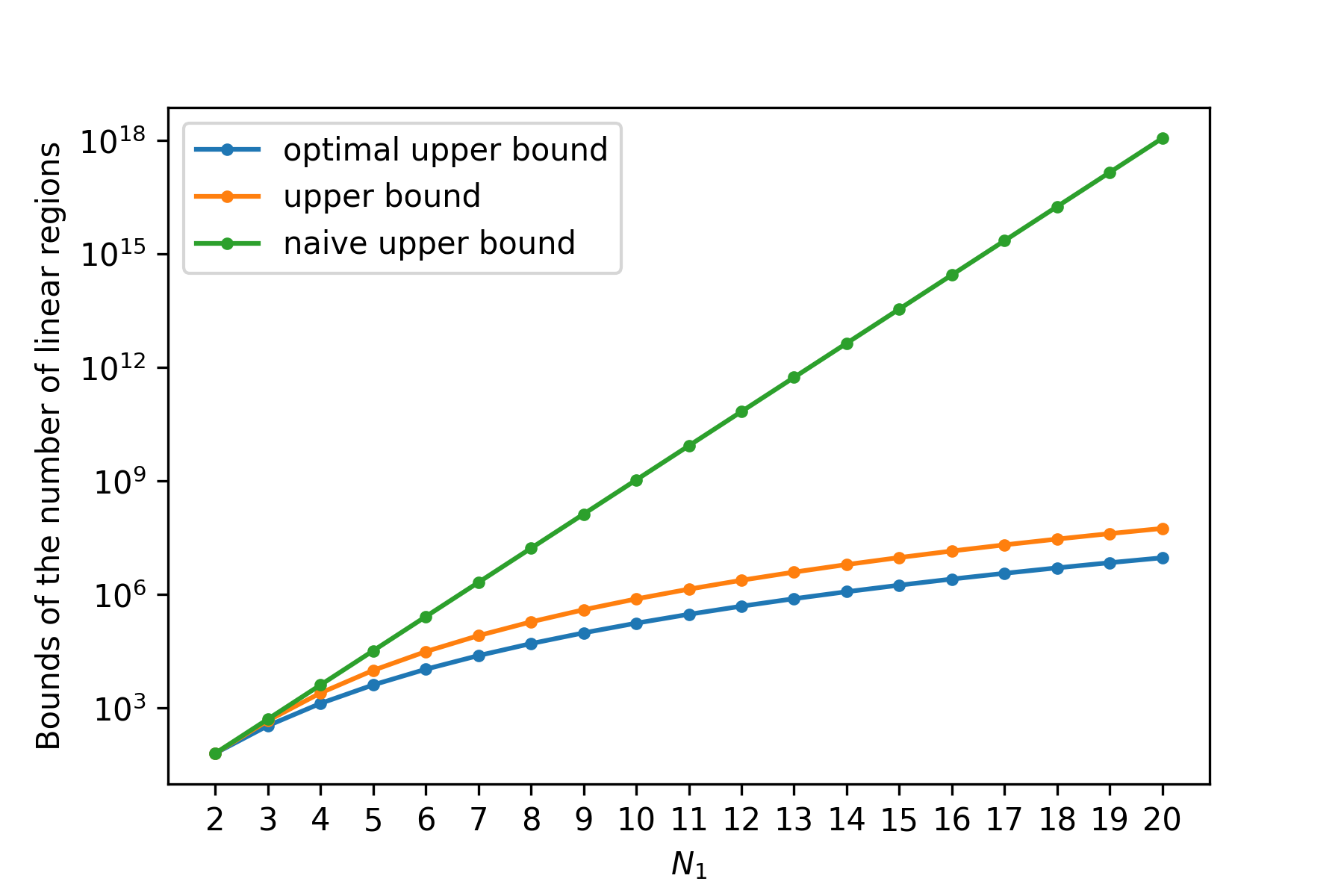}   
	\end{minipage}
\begin{minipage}{0.48\textwidth}
	\centering         
	\includegraphics[width=\textwidth]{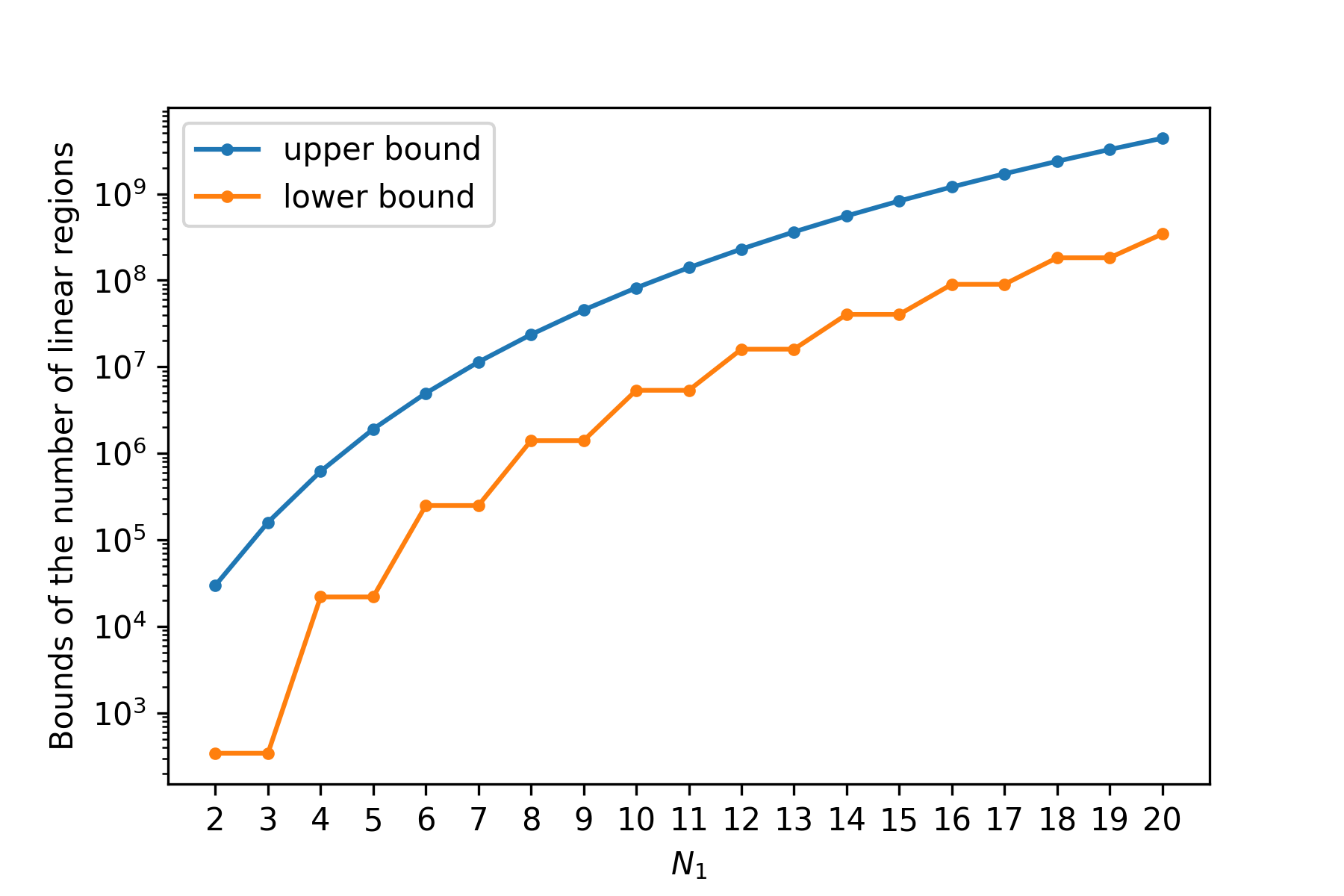}   
	\end{minipage}
\end{minipage}\vspace{1mm}
\caption{Bounds for GCNs with 1 layer (left) and 2 layers (right) on an input graph which consists of 4 nodes and 3 edges. The adjacency matrix is shown in \eqref{eq:adjmat}. The left image shows the optimal upper bound (green) gained from Theorem~\ref{thm:one-layer-eq}, upper bound (orange) from Proposition~\ref{prop:nn-eq} and the naive upper bound (blue) from Lemma~\ref{lemma:trivial-bound} for a 1-layer GCN with 2 input feature and $N_1$ output features where $N_1$ ranges from 2 to 20. The right image shows the lower bound (orange) and upper bound (blue) obtained from Theorem~\ref{thm:bounds} for a 2-layer GCN with 2 input feature and $N_1$ and 3 output features for the first and second layers.}
\label{fig:linear-region}
\end{figure}

\end{document}